  \theoremstyle{definition}
  \newtheorem{definition}{Definition}
  \theoremstyle{plain}
  \newtheorem{theorem}{Theorem}
  \newtheorem{lemma}{Lemma}
  \newtheorem{claim}{Claim}
  \theoremstyle{remark}
\newcommand{\ignore}[1]{}
\newcommand{\EE}{\mathbb{E}}
\newcommand{\NN}{\mathbb{N}}
\newcommand{\RR}{\mathbb{R}}
\def \cM     {{\cal M}}
\def \cX     {{\cal X}}
\def \cY     {{\cal Y}}
\newcommand{\eg}{\textit{e.g.,}\xspace}
\newcommand{\ie}{\textit{i.e.,}\xspace}  %
\newcommand{\iid}{\textit{i.i.d.}} %
\def \Paren#1{{\left({#1}\right)}}
\newcommand{\eqdef}{{:=}}
\newcommand{\probof}[1]{\Pr\Paren{#1}}
\def\ignore#1{}
\newcommand{\bi}{\begin{itemize}}
\newcommand{\ei}{\end{itemize}}
\def\orpro{\mathop{\mathchoice
   {\vee\kern-.49em\raise.7ex\hbox{$\cdot$}\kern.4em}
   {\vee\kern-.45em\raise.63ex\hbox{$\cdot$}\kern.2em}
   {\vee\kern-.4em\raise.3ex\hbox{$\cdot$}\kern.1em}
   {\vee\kern-.35em\raise2.2ex\hbox{$\cdot$}\kern.1em}}\limits}
\def\andpro{\mathop{\mathchoice
 {\wedge\kern-.46em\lower.69ex\hbox{$\cdot$}\kern.3em}
 {\wedge\kern-.46em\lower.58ex\hbox{$\cdot$}\kern.25em}
 {\wedge\kern-.38em\lower.5ex\hbox{$\cdot$}\kern.1em}
 {\wedge\kern-.3em\lower.5ex\hbox{$\cdot$}\kern.1em}}\limits}
\def\simge{\mathrel{%
   \rlap{\raise 0.511ex \hbox{$>$}}{\lower 0.511ex \hbox{$\sim$}}}}
\def\simle{\mathrel{
   \rlap{\raise 0.511ex \hbox{$<$}}{\lower 0.511ex \hbox{$\sim$}}}}
\newcommand{\prob}{\mathbb{P}}
\newcommand{\ab}{k}
\newcommand{\ns}{n}
\newcommand{\p}{p}
\newcommand{\seq}{\text{seq}}
\newcommand{\kSeq}{\textsc{k-Seq}}
\newcommand{\pres}{p^{\text{res}}}
\newcommand{\indic}[1]{\mathbbm{1}\left\{#1\right\}}
\newcommand{\domain}{\Omega}
\def\withcolors{0}
\def\withnotes{0}
\def\short{0}
\renewcommand{\citep}[1]{\cite{#1}}
\renewcommand{\gamma}{\rho}
\newcommand{\newcr}[1]{{\textcolor{red}{#1}}}
\newcommand{\new}[1]{{\textcolor{red}{#1}}}
\newcommand{\theerthanew}[1]{\textcolor{blue}{#1}}
\newcommand{\newcr}[1]{{{#1}}}
\newcommand{\new}[1]{{{#1}}}
\newcommand{\theerthanew}[1]{{#1}}
\newcommand{\coloredcomment}[1]{\COMMENT{\textcolor{blue}{#1}}}
\newcommand{\zs}[1]{{\noindent \textit{\small\textcolor{orange}{ziteng: #1}}}}
\renewcommand{\ab}[1]{{\noindent \textit{\small\textcolor{red}{ahmad: #1}}}}
\newcommand{\ts}[1]{{\noindent \textit{\small\textcolor{red}{theertha: #1}}}}
\newcommand{\adrian}[1]{{\noindent \textit{\small\textcolor{blue}{adrian: #1}}}}
\newcommand{\felix}[1]{{\noindent \textit{\small\textcolor{purple}{felix: #1}}}}
\newcommand{\todo}[1]{{\noindent \textit{\small\textcolor{blue}{TODO: #1}}}}
\newcommand{\zs}[1]{{}}
\renewcommand{\ab}[1]{{}}
\newcommand{\ts}[1]{{}}
\newcommand{\adrian}[1]{}
\newcommand{\felix}[1]{{}}
\newcommand{\todo}[1]{{}}
\newcommand{\psucc}{\p_{\rm acc}}
\crefname{claim}{Claim}{Claim}
\newcommand{\algnametitle}{\text{SpecTr}}
\newcommand{\algname}{\emph{\algnametitle}}
\newcommand{\selection}{{\rm DraftSelection}}
\title{\algnametitle: Fast Speculative Decoding via  Optimal Transport}
\author{%
  Ziteng Sun\thanks{Equal contribution.} \\
 Google Research, New York\\
  \texttt{zitengsun@google.com} \\
  \And
  Ananda Theertha Suresh$^*$ \\
 Google Research, New York\\
  \texttt{theertha@google.com} \\
  \And
  Jae Hun Ro \\
 Google Research, New York\\
  \texttt{jaero@google.com} \\
  \And
  Ahmad Beirami \\
 Google Research, New York\\
  \texttt{beirami@google.com} \\
  \And
  Himanshu Jain \\
 Google Research, New York\\
  \texttt{himj@google.com} \\
    \And
  Felix Yu \\
 Google Research, New York\\
  \texttt{felixyu@google.com} \\
}
\begin{document}
\maketitle
\begin{abstract}
Autoregressive sampling from large language models has \theerthanew{led} to state-of-the-art results in several natural language tasks.
However, autoregressive sampling generates tokens one at a time making it slow, and even prohibitive in certain tasks. One way to speed up \theerthanew{sampling} is \emph{speculative decoding}: use a small model to sample a \emph{draft} (block or sequence of tokens), and then score all tokens in the draft by the large language model in parallel. \theerthanew{A subset of the} tokens in the draft are \theerthanew{accepted} \theerthanew{(and the rest rejected)} based on a statistical method to guarantee that the final output 
\theerthanew{follows the distribution of} the large model. 
In this work, we provide a principled understanding of speculative decoding through the lens of optimal transport (OT) with \emph{membership cost}. This framework can be viewed as an extension of the well-known \emph{maximal-coupling} problem. 
This new formulation enables us to generalize the \new{speculative decoding} method to allow for a set of $k$ candidates at the token-level, which leads to an improved optimal membership cost. We show that the optimal \new{draft selection algorithm (transport plan)} can be computed via linear programming, whose best-known runtime is exponential in $k$. We then propose \new{a valid draft selection algorithm}  whose acceptance probability is $(1-1/e)$-optimal
multiplicatively. Moreover, it can be computed in time almost linear with size of domain of a single token.
Using this \new{new draft selection} algorithm, we develop a new autoregressive sampling algorithm called \algname,
\theerthanew{ which provides speedup in decoding while ensuring that there is no quality degradation in the decoded output.}
\new{We experimentally demonstrate that for state-of-the-art large language models, the proposed approach achieves a wall clock speedup of 2.13X, a further 1.37X speedup over speculative decoding on standard benchmarks.}
\end{abstract}

\section{Introduction}\label{sec:intro}
Autoregressive language models have shown to achieve state-of-the-art results in several natural language tasks \citep{brown2020language,  chowdhery2022palm, thoppilan2022lamda, touvron2023llama}. During inference,  given a context $x^{t} \eqdef x(1),x(2)\ldots, x(t)$, an autoregressive model $\cM_b$ generates successive tokens $x(t+1), x(t+2),\ldots$ via temperature sampling~\citep{ackley1985learning, ficler2017controlling}, where the next token $x(t+1)$ is drawn from the temperature-scaled distribution $\cM_b(\cdot | x^t)$.
If the temperature is zero, i.e., greedy decoding, the next token is determined by the maximum likelihood method i.e., $x(t+1) = \arg\max_{x \in \domain} \cM_b(x | x^t)$, where $\domain$ is the \new{domain of a single token} \theerthanew{also referred to as the vocabulary}.
The sampling approach can be further combined with other sampling primitives such as nucleus sampling~\citep{holtzman2019curious} and top-$k$ sampling~\citep{fan2018hierarchical, radford2019language}.

All these approaches are autoregressive decoding\footnote{\new{In this work, we use the words \textit{sampling} and \textit{decoding} interchangably to refer to the process of sequentially generating tokens from a language model.}} methods, where tokens are generated serially one after another, which can be slow or even prohibitive in several applications \citep{stern2018blockwise}. Hence, several techniques have been proposed to improve the speed of decoding. Before we proceed further, we first present some notations and a simplified computational model.

 \noindent \textbf{Notations.} We use $x^{i:j}$ to denote the sequence $x(i), x(i+1), \ldots, x(j)$ and when $i = 1$, we simply use $x^j = x^{1:j}$. $x(i)$ denotes the $i$-th entry of $x$. Subscripts are used to distinguish between different sequences, \eg $x_1^t$ and $x_2^t$ denote two sequences of length $t$. We use $[\ns]$ to denote the set $\{1, \ldots, \ns\}$.

\noindent \textbf{\theerthanew{A simplified c}omputational model.}
\begin{itemize}
\item \textbf{Standard inference.} Given a context $x^t$, with $O(t^2)$ computation and $O(1)$ time, an autoregressive model $\cM_b$ can compute $\cM_b(y | x^t)$, the (temperature-scaled) probability of all possible next tokens $y \in \domain$.
\item \textbf{Parallelization along the time axis.}  Given a context $x^{t}$, with $O(t^2)$ computation and $O(1)$ time, an autoregressive model $\cM_b$ can compute $\cM_b(y | x^i)$, for all $y \in \domain$ and $i \in \{1,2,\ldots, t\}$.
\item \textbf{Parallelization along time and batch axis.} Let $K$ be the maximum batch size that can be used during the inference of the autoregressive model. Given several contexts,  $x^{t}_1, x^{t}_2,\ldots x^{t}_K$, with $O(Kt^2)$ computation and $O(1)$ time, an autoregressive model $\cM_b$ can compute $\cM_b(y | x^i_j)$, for all $y \in \domain$, $i \in [t]$,  and $j \in [K]$.\footnote{When the assumption holds, one could naively batch multiple decoding contexts, which improves decoding throughput, but not the latency of each context.}
\end{itemize}

\medskip
The above computation model shows that parallelizing along time and batch axes does not increase the computation time. It is a simplified characterization of the typical hardware, such as TPUs and GPUs, used in neural network inference.
 Previous approaches also assume similar computational model to devise faster decoding algorithms \citep{leviathan2022fast, chen2023accelerating}.
In practice, there will be some overhead depending on hardware, implementation and resource utilization. \new{In \cref{app:experiments}, we experimentally verify that the theoretical gains are largely preserved for a large transformer model in practice.} \ab{edited this a bit.}%
We also note that there are efficient transformer architectures, which reduces the computation cost from $O(t^2)$ to $O(t\log t)$ (see \citep{tay2022efficient} for a detailed survey). Such approaches are orthogonal to the focus of this paper, and they can be easily combined with our approach.

 Broadly speaking, multiple previous approaches proposed to guess a few possible future tokens using an efficient model. They then compute several conditional probability distributions from the large model based on the guesses. Computing the distributions takes $O(1)$ time due to parallelization \new{along the time axis.}
 The guessed tokens are then accepted or rejected based on a statistical method such that the accepted tokens are effectively samples from the large model. \new{This guarantees that there is provably no degradation in the quality of the decoded output compared to that of the large model.}
 \zs{Emphasizing this a bit.}
 When the guesses are \theerthanew{plausible under the large model}, multiple tokens will be accepted, \theerthanew{leading to a larger gain in latency improvement}. \theerthanew{We will further characterize the acceptance probability as a function of the closeness of the distributions of large model and the small model.}
While this approach incurs the same computation cost as vanilla decoding (\theerthanew{under the simplified computational model assumed in this paper}), it can significantly improve decoding latency due to parallelization. 

The goal of this work is to provide a principled understanding of the above approaches and discuss optimality conditions and algorithmic improvements. We start by providing a more formal overview of speculative decoding and related works.

\vspace{-5pt}
\section{Previous works and speculative decoding}
\label{sec:previous}

Previous approaches make use of parallelization along the time axis to provide speedups. They first predict multiple tokens and validate if these multiple tokens can be generated by the model with the corresponding sampling or decoding scheme. 
For greedy decoding, multiple tokens can be predicted by a separate model \citep{stern2018blockwise}, aggressive decoding \citep{ge2022lossless}, or retrieval augmented text \citep{yang2023inference}. For sampling, recently \cite{leviathan2022fast, chen2023accelerating} proposed an algorithm called speculative decoding, and we provide an overview of this algorithm in the rest of the section.
Suppose we have access to a computationally-inexpensive draft model $\cM_s$, which predicts the \theerthanew{next} token given the context, and the predictions of $\cM_s$ \theerthanew{are close to} that of $\cM_b$ for most contexts. Suppose we have obtained prefix $x^t$. The next \new{iteration} of the speculative algorithm can be broken down into three steps \new{(see \cref{fig:speed_iteration} for an illustration)}.
\begin{enumerate}
    \item \textbf{Draft construction.} The draft model \new{is} used to efficiently and ``speculatively'' sample $L$ tokens, %
    $\tilde{x}(t+1), \ldots, \tilde{x}(t+L)$. We keep the conditional probabilities on the next token $\cM_s(y \mid x^t, \tilde{x}^{t+1:t+i})$ for each $i < L$ and $\forall y \in \domain$.
    \item \textbf{Conditional probability computation.} After observing the samples, we compute the conditional distributions $\cM_b(y \mid x^t, \tilde{x}^{t+1:t+i})$
for each \new{$i \le L$}  and $\forall y \in \domain$ in parallel (along time axis) in $O(1)$ time.
    \item \textbf{Draft selection.} \new{Validate and} select first $L'$ of the $L$ tokens and set $x(t+i) = \tilde{x}(t+i)$ for $i \leq L'$ given the draft sequence and the conditional probabilities from both models. \new{Sample a token $\tilde{x}(t+L'+1)$ from a \textit{residual} distribution as a \textit{correction} to the rejected token.\footnote{\new{See \cref{alg:rej} for definition of the residual distribution. When $L' = L$, no token is rejected. The residual will just be the conditional probability $\cM_b(\cdot\mid x^{t+L})$, which gives an \textit{extra} decoded token.}}
    }
\end{enumerate}

\begin{figure}
\centering
\includegraphics[width = 0.6\linewidth]{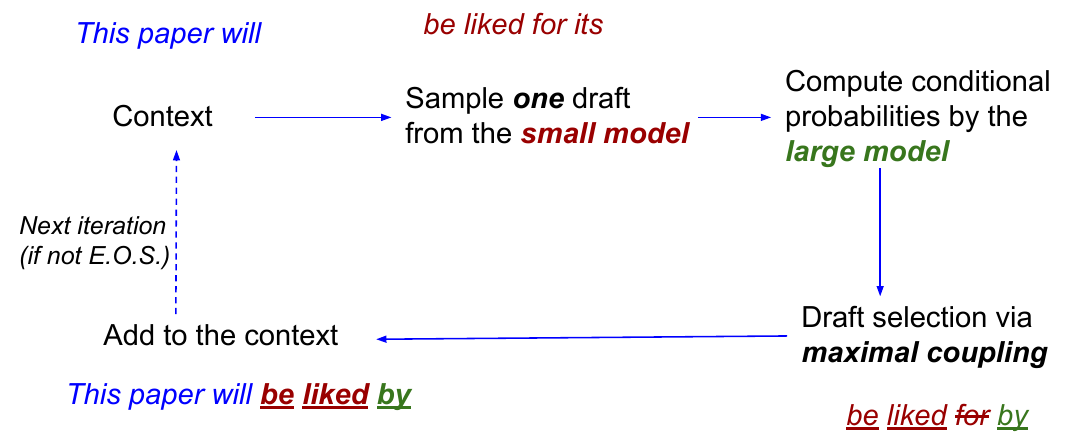}
\caption{One iteration of speculative decoding \citep{leviathan2022fast, chen2023accelerating}. Tokens in blue are decoded %
\theerthanew{tokens} from previous iterations\theerthanew{, which are used as context for the current iteration.} Tokens in red are drafts from the small model \theerthanew{based on the context}. 
The underlined tokens are the newly decoded tokens in the current iteration, where underlined red tokens represent tokens selected from the draft and underlined green token \new{is} selected from the residual \new{distribution}.}
\zs{PTAL at the above figure.}
\label{fig:speed_iteration}
\vspace{-15pt}
\end{figure}

\medskip
After this step, we use $x^{t+L'+1}_1$ as the next \new{context and sample the next few tokens using speculative decoding iteratively. For a complete statement of the algorithm, we refer the readers to~\citep{leviathan2022fast, chen2023accelerating}}. 
The crux of the above steps is draft selection, which given a draft sequence and the conditional probabilities from both models, selects a valid sequence such that the output has \theerthanew{the} same distribution as that of the large model. In speculative decoding, this is achieved via recursively applying a token-level maximal coupling algorithm, which is provided in Algorithm~\ref{alg:rej}. Note that for the draft selection, Algorithm~\ref{alg:rej} is applied where $p$ is the conditional distribution of the draft model $\cM_s(\cdot \mid  x^t)$ and $q$ is the conditional distribution of the large model $\cM_b(\cdot \mid x^t)$ (which may be further conditioned on the \new{newly decoded tokens}). 
\begin{algorithm}[ht]
\caption{Token-level maximal coupling}
\label{alg:rej}
\begin{algorithmic}[1]
\REQUIRE{Distributions $p, q$, Draft sample \new{$X \sim p$}.} 
\STATE Compute \new{the \textit{residual} distribution} $\pres$ where $\forall x \in \new{\domain}, \pres(x) = \frac{q(x) - \min\{p(x), q(x)\}}{1 - \sum_{x'}  \min\{ p(x'), q(x')\}}.$
\STATE Sample $\eta\sim U(0,1)$.
\IF{$\eta \leq \min \left(1, \frac{q(X)}{p(X)}\right)$}
\STATE \new{\textbf{Return}} $Y = X$. \coloredcomment{\new{\textit{Accept the draft token.}}}
\ENDIF
\STATE \textbf{Return} $Y \sim \pres$. \coloredcomment{\textit{\new{Sample a corrected token from the residual distribution.}}}
\end{algorithmic}
\end{algorithm}

Algorithm~\ref{alg:rej} returns a random variable $Y$ which  either is the accepted input $X$ or
a sample from the residual distribution $\pres$, which is defined in Step $1$ of Algorithm~\ref{alg:rej}. 
The algorithm is recursively applied as long as the draft tokens are accepted to select the first $L' \leq L$ tokens from the draft model. \new{For the first rejected token, the sample $Y$ from the residual distribution is used as a \textit{correction}. }
Previous works showed that if $X \sim p$, then $Y \sim q$ \citep{leviathan2022fast, chen2023accelerating}. In the case of the draft selection, this means that the output of the algorithm is distributed according to $\cM_b(\cdot \mid x^t)$, which is exactly the desired outcome. Furthermore
\[
\Pr(Y = X) = \sum_{x \in \new{\Omega}} \min(p(x), q(x)) = 1 - d_{\text{TV}}(p, q),
\]
where $d_{\text{TV}}$ is the total variation distance between $p$ and $q$. %
\new{The closer $p$ and $q$ are in $d_{\text{TV}}$}, the higher the chance of $\Pr(Y=X)$, and fewer the number of serial calls to the larger model. 
In the ideal case, if $p=q$, then $\Pr(Y=X) = 1$, i.e., the draft token is always accepted, and when used for speculative decoding we have $L' = L.$ \new{Together with the extra sampled token\footnote{\new{When $L' = L$, $x(t+L+1)$ is sampled from $ \cM_b(\cdot\mid x^{t+L})$.}} from $\cM_b$, $L+1$ tokens are obtained in one \new{iteration}.
In such a case, based on our computational model (Section \ref{sec:intro}), assuming the decoding time of draft model is negligible, the speedup is $(L+1)$ times.}

\vspace{-10pt}
\section{Our contributions}
\label{sec:contrib}
\ifnum\short=1
\vspace{-10pt}
\fi
\setlist[itemize]{leftmargin=5.5mm}

From a theoretical viewpoint, the speculative decoding algorithm raises multiple questions. 
\begin{itemize}
\item What is the relationship between speculative decoding and the broader literature of sampling in statistics? 
\item Is speculative decoding optimal in an information-theoretic sense? 
\item Speculative decoding uses parallelization along time to speed up decoding; would it be possible to use parallelization along batch (number of drafts) to further improve decoding speed? 
\end{itemize}

We provide answers to all the above questions in this work. 
We first relate the problem of speculative decoding to the broader and well-studied discrete optimal transport theory through  \new{a token-level coupling problem} (Section \ref{sec:ot}). 
With this connection, it becomes clear that the token-level draft selection is the optimal solution for optimal transport with indicator cost function and  also related to the problem of maximal coupling \citep{den2012probability}.
Based on the connection to optimal transport, we show that one can further speed up the decoding by parallelizing along the batch axis by using multiple drafts from the draft model (Section \ref{sec:lp}). 

More precisely, we formulate the \new{token-level} draft selection problem as \new{a discrete optimal transport} problem with membership cost, \new{which is referred to as OTM}. Discrete optimal transport can be solved with a linear program, but the number of variables is exponential in batch size, which can be prohibitive. To address this, we propose a \new{valid transport plan that can be efficiently computed}. Moreover, it achieves a $(1-1/e)$-approximation of the optimal acceptance probability (Section \ref{sec:approx}). 

\begin{figure}
\centering
\includegraphics[width = 0.6\linewidth]{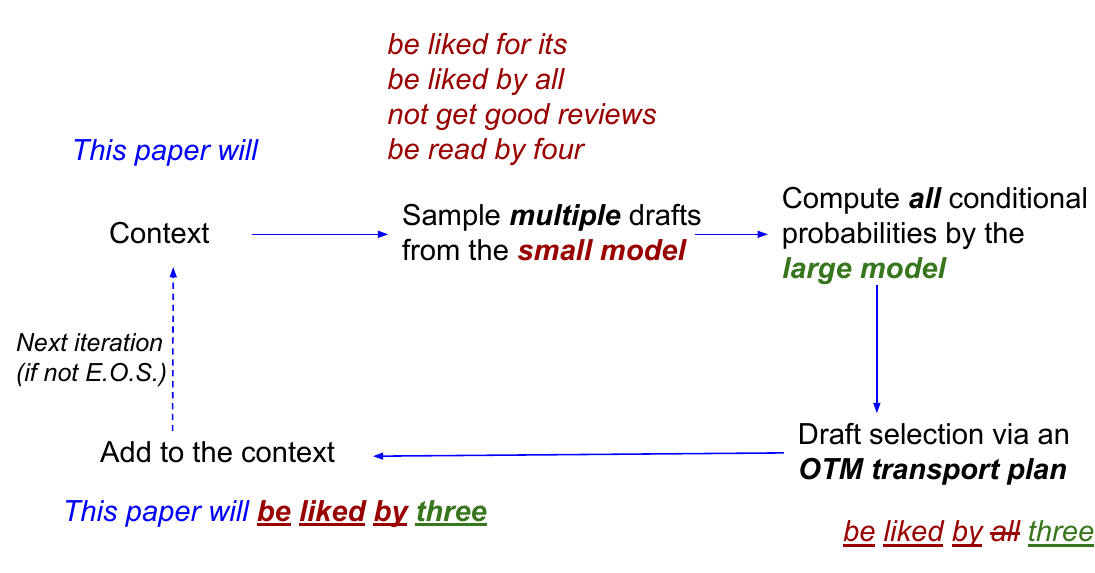}
\caption{One iteration of~\algname. Tokens in blue are decoded %
\theerthanew{tokens} from previous iterations\theerthanew{, which are used as context for the current iteration.} Tokens in red are drafts from the small model \theerthanew{based on the context}. 
The underlined tokens are the newly decoded tokens in the current iteration, where underlined red tokens represent tokens selected from the draft and underlined green token \new{is} selected from the residual \new{distribution}. \new{See \cref{fig:spectr_main} for a more detailed run of the draft selection step.}}
\label{fig:spectr_iteration}
\vspace{-15pt}
\end{figure}

With the theoretically motivated algorithms and guarantees, we circle back to speeding up decoding and propose a new algorithm called \algname~and theoretically show that it can be used to derive valid sequences from the large model \new{with better speedups} (Section \ref{sec:spectr}). \new{See \cref{fig:spectr_iteration} for an illustration of~\algname. Compared to speculative decoding (\cref{fig:speed_iteration}), the main difference lies in the number of sampled drafts sampled from the small model and the selection algorithm that selects a valid sequence from multiple draft sequences. We remark here that the latter requires completely new statistical tools, and
the connection between the token-level draft selection and OTM is critical for obtaining valid transport plans with good guarantees. We view this as one of the main contributions of the work.} \new{Similar to speculative decoding, there is provably no degradation in the quality of the decoded output compared that of the large model.}

We then experimentally demonstrate the benefit of our approach on standard datasets (Section \ref{sec:experiments}). \new{More precisely, we show that for state-of-the-art large language models, \algname~achieves a wall clock speedup of 2.13X, a further 1.37X speedup over speculative decoding on standard benchmarks.}

\section{Token-level draft selection and optimal transport}
\label{sec:ot}

\new{In this section, we focus on the draft selection step of \algname. 
We start by considering the case when $L = 1$, which is a token-level draft selection problem. In particular, given context $x^t$,  let $X_1, \ldots X_k$ be a collection of draft tokens sampled from the small model, \eg \theerthanew{sampled } \iid~ from $\cM_s(\cdot \mid x^t)$. Note that by our assumption of the computation model, we could compute the following conditional probabilities from the large model in parallel (\theerthanew{
along time and batch axes}):
\[
    \cM_b(\cdot \mid x^t)\qquad \text{ and } \qquad \forall i \in [k], \;\;\; \cM_b(\cdot \mid x^t, X_i).
\]
}
\new{
\noindent The goal of the draft selection algorithm $f: \domain^k \rightarrow \domain$ is to output $Y = f(X^k)$, whose distribution follows $\cM_b(\cdot \mid x^t)$, and hence is a valid sample from the large model. Moreover, when $Y \in \{X_1, \ldots, X_k\}$, we could sample an extra token from $\cM_b(\cdot \mid x^t, Y)$ without calling $\cM_b$ since we have already computed \theerthanew{the conditional probabilities $\cM_b(\cdot \mid x^t, Y)$}. Hence we would \theerthanew{
like
} to maximize the probability that we accept one token from the set of drafts. 
}

\new{
When $L > 1$, 
\theerthanew{the drafts}
are sequences  sampled from $\cM_s$, a sequence of token-level draft selection algorithms could be used along the time axis to select a valid sequence from the $\cM_b$. See an example in \cref{fig:spectr_main}. The full details about the sequence-level selection algorithm is provided in \cref{sec:spectr}.
}

\begin{figure}
\centering
\includegraphics[width=0.5\textwidth]{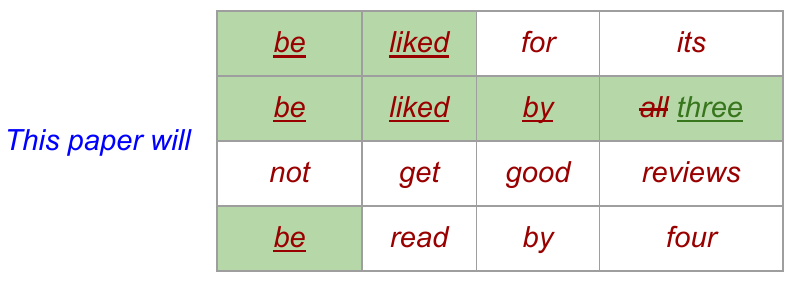}
\caption{\new{An example run of the sequence-level draft selection in \algname~with $L=4$ and $4$ draft sequences. 
In the first step, there are $4$ drafts tokens, and the token-level draft selection algorithm selects the word `\textit{be}' which appeared thrice. Note that all tokens following `\textit{be}' are valid draft tokens from the small model. In the second step, there are 3 drafts and the selection algorithm selects `\textit{liked}'. The next token-level selection algorithm will have two drafts (`\textit{by}' and `\textit{for}') and it selects `\textit{by}'. Finally, there is only one draft following `\textit{by}', and the selection algorithm doesn't select it and outputs `\textit{three}' as a correction. The process ends and a total of $4$ tokens are generated.}} 
\label{fig:spectr_main}
\vspace{-15pt}
\end{figure}

\new{The reminder of the section will be focused on the token-level draft selection problem. From the above discussion, there are the two main goals of the draft selection problem.
\begin{itemize}
    \item \textbf{Validity.} The output token is always a valid token from the large model \theerthanew{\ie its distribution follows the conditional probability of the large model}. This guarantees that there is no quality degradation compared to the large model.
    \item \textbf{Maximizing acceptance.} The higher the probability that we accept a draft token, the more serial computation we can save through parallelization, and hence better speedup.
    \end{itemize}

\newcr{Before proposing our framework to achieve the above goals, we would like to first discuss the technical challenge of draft selection with multiple draft tokens. One attempt is to sequentially apply the acceptance phase of \cref{alg:rej} (line 3 - 5) to each draft token $X_i$ with $p = \cM_s(\cdot \mid x^t)$ and $q =\cM_b(\cdot \mid x^t)$. However, this approach would not guarantee that the final accepted token is from the desired distribution. To see this, consider the example of $p = \text{Ber}(1)$ and $q = \text{Ber}(1/2)$.\footnote{$\text{Ber}(b)$ denotes a Bernoulli distribution with the probability of seeing a head $b$.} Then we have $\forall i = 1, \ldots, k,$ $X_i = 1$ and each of them will be accepted with probability $1/2$. After applying \cref{alg:rej} to all $X_i$'s, the probability of getting a $1$ will be at least $1 - 1/2^k $ and hence the output distribution would not be $\text{Ber}(1/2)$ for $k > 1$. Therefore the algorithm does not produce valid samples, which is a requirement of the draft selection problem.}

\newcr{In this work, we conduct a principled investigation of the draft selection problem, and show that these} two main goals could be captured by the framework of optimal transport with a properly defined cost function. 
Next we define optimal transport formally and then connect it to draft selection with one draft. The generalization to multiple drafts is provided in \cref{sec:lp}.}

\paragraph{Coupling and optimal transport.} To simplify notations, we assume $\domain$ is a discrete domain.
\begin{definition}[Coupling]\label{def:coupling}
For two probability distributions $P$ over $\cX$ and $Q$ over $\cY$, we say a joint distribution $\pi$ supported over $\cX \times \cY$ is a coupling between $P$ and $Q$ if $\forall x, y, \pi(x, y) \ge 0$, 
\begin{align*}
    \forall y \in \cY, \; \;  \sum_{x \in \cX} \pi(x, y) = Q(y),\qquad  \text{ and } \qquad
    \forall x \in \cX, \; \;   \sum_{y \in \cY} \pi(x, y) = P(x).
\end{align*}
We use $\Pi(P,Q)$ to denote the set of all possible couplings between $P$ and $Q$.
\end{definition}
When it is clear from context, we will overload notation and refer to the probabilistic mapping $f_\pi: \cX \rightarrow \cY$ introduced by the conditional probability $\pi(y \mid x) \eqdef \pi(x, y)/P(x)$ as a coupling, which is also referred to as a transport plan from $P$ to $Q$~\citep{villani2009optimal}. \new{In this paper, we will set $P$ to be the distribution of the draft tokens and $Q$ to be the target distribution of the output token. In this case, the $f_\pi$ is a valid draft selection algorithm. Formally, this is stated in the claim below.
\begin{claim}
For all $\pi \in \Pi(P, Q)$, let $f_\pi$ be the probabilistic mapping defined above
\theerthanew{. If}
$X \sim P$, then $f_\pi(X) \sim Q$.
\end{claim}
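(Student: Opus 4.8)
This claim is essentially a restatement of the definition of a coupling in terms of the induced probabilistic mapping, so the proof should be short and direct.

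The plan is to compute the distribution of $f_\pi(X)$ directly and verify it equals $Q$. First I would fix an arbitrary $y \in \cY$ and compute $\Pr(f_\pi(X) = y)$ by conditioning on the value of $X$. Since $f_\pi$ is the probabilistic mapping whose conditional law is $\pi(y \mid x) = \pi(x,y)/P(x)$, and $X \sim P$, the total probability rule gives
\begin{align*}
\Pr(f_\pi(X) = y) = \sum_{x \in \cX} \Pr(X = x)\, \pi(y \mid x) = \sum_{x \in \cX} P(x) \cdot \frac{\pi(x,y)}{P(x)} = \sum_{x \in \cX} \pi(x,y).
\end{align*}
The key step is then to invoke the defining property of a coupling from \cref{def:coupling}, namely the marginal constraint $\sum_{x \in \cX} \pi(x,y) = Q(y)$ for every $y \in \cY$. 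Substituting this in yields $\Pr(f_\pi(X) = y) = Q(y)$, and since $y$ was arbitrary this shows $f_\pi(X) \sim Q$, as desired.

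The only mildly delicate point is the handling of $x$ with $P(x) = 0$: there the conditional $\pi(y \mid x)$ is defined by dividing by zero, so one should either restrict the sum to the support of $P$ (noting that $\pi(x,y) = 0$ whenever $P(x) = 0$, which follows because $\sum_{y} \pi(x,y) = P(x) = 0$ together with nonnegativity of $\pi$) or simply adopt the convention that such terms contribute nothing since $X = x$ occurs with probability zero. I do not expect any real obstacle here; the statement is a one-line consequence of unwinding the definition of the induced transport plan and applying the $\cY$-marginal constraint, with the zero-probability caveat being the only thing worth a sentence of care.
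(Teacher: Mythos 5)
Your proof is correct and matches the paper's treatment: the paper states this claim without proof, regarding it as an immediate consequence of the marginal constraint in \cref{def:coupling}, which is exactly the computation you carry out. Your extra remark on the $P(x)=0$ case (where $\pi(x,y)=0$ follows from nonnegativity and the $\cX$-marginal constraint) is a careful touch but does not change the substance.
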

In this paper, we will design selection algorithms by finding valid couplings between the draft distribution and target distribution to guarantee validity of the output tokens.}

\begin{definition}[Optimal Transport (OT) \citep{villani2009optimal}] For a cost function $c: \cX \times \cY \rightarrow \RR_+$, the \emph{transportation cost} of a coupling is defined as:
\[
    C(\pi) = \EE_{X, Y \sim \pi}\left[ c(X,Y) \right].
\]
The \emph{optimal transport plan} is the coupling $\pi \in \Pi(P,Q)$ that minimizes the transportation cost.
\end{definition}

\paragraph{Speculative decoding with one draft token.}
With these definitions in place, we can see that with $\cX =\cY = \domain$, \new{the domain of the tokens and $P = p, Q = q$}, we recover the speculative decoding objective with one draft token using the cost function of \emph{indicator cost}, which captures the resampling cost, defined below:
\[
 \forall x \in \domain, \,\,y \in \domain, \qquad c(x, y) = \indic{y \neq x}.
\]
The transportation cost of the coupling will be
$
    C(\pi) = \EE_{X, Y \sim \pi}\left[ \indic{Y \neq X} \right] = \prob_{X, Y \sim \pi}\Paren{Y \neq X}.
$
This optimal transport cost is known to be
\begin{equation}\label{eqn:cost_k1}
    \min_{\pi \in \Pi(p, q)} \prob_{X, Y \sim \pi}\Paren{Y \neq X} = \sum_{x \in \Omega} \min(p(x), q(x)),
\end{equation}
\new{which is achieved by the maximal coupling between $p$ and $q$ stated in \cref{alg:rej}~\citep{den2012probability}. %
And hence speculative sampling achieves the optimal cost with one draft token.}
\section{Optimal transport with multiple draft tokens}
\label{sec:lp}
In this section, we generalize \new{token-level selection} to allow for multiple drafts. More formally, let $\cX = \domain^k$ for some $k \in \NN_+$, which is the space of $k$ draft tokens from $\Omega$ and $\cY = \domain$, which is the space of the final sampled token from the desired distribution. 
To characterize the resampling cost \new{with multiple draft tokens}, we use the cost function of \emph{membership cost}, defined below:
\[
 \forall x \in \domain^k, \,\, y \in \domain, \qquad c(x, y) = \indic{y \notin S(x)},
\]
where $S(x) = \{o \in \Omega \mid o \text{ appears in } x\}$ denotes the set of distinct elements in $x$. When $k = 1$, it recovers the indicator cost mentioned before. The transportation cost of the coupling is
\begin{equation} \label{eqn:transport_cost}
    C(\pi) = \EE_{X, Y \sim \pi}\left[ \indic{Y \notin S(X)} \right] = \prob_{X, Y \sim \pi}\Paren{Y \notin S(X)}.
\end{equation}
We will also refer to the above cost $C(\pi)$ as the \emph{rejection probability} due to its probabilistic interpretation. And similarly, $\alpha(\pi) \eqdef 1 - C(\pi) = \prob_{X, Y \sim \pi}\Paren{Y \in S(X)}$ will be the \emph{acceptance probability.}

From now on we will use membership cost as the default cost function and refer to the optimal transport solution as \emph{optimal transport with membership cost} (OTM). 
We use $\pi^*$ to denote the coupling that minimizes this cost $\pi^* = \arg \min_{\pi \in \Pi(P,Q)} C(\pi);$\footnote{The existence of optimal coupling in discrete domain is well-known, \eg see \cite{villani2009optimal}. When the optimal coupling is not unique, we use $\pi^*$ to denote one of the optimal couplings.} and the cost $C(\pi^*)$ is referred to as the \emph{optimal transport cost} between $P$ and $Q$. We use $\alpha(P,Q) = 1 - C(\pi^*)$ to denote the corresponding optimal acceptance probability.

\paragraph{Draft selection with \iid~draft tokens.}
In this paper, we will mainly focus on the case when the draft tokens are \iid~samples from a base distribution.\footnote{\new{The above generic formulation immediately allows generalization to more complex draft selection strategies, such as sampling $k$ tokens without replacement, or using a different drafting distribution  for each draft. %
}} Let $p, q$ be supported over $\domain$ and the goal is to obtain one valid token from $q$ given $k$ \iid~samples from $p$. \new{For \algname~with context $x^t$, we have $p = \cM_s(\cdot \mid x^t)$ and $q = \cM_b(\cdot \mid x^t)$.}
We set $P = p^{\otimes k}$, a product distribution whose marginals are all $p$, and $Q = q$. The OT problem we want to solve is the following:
\begin{align} \label{eqn:ot_k}
    \min C(\pi) \;\; s.t. \;\; \pi \in \Pi(p^{\otimes k}, q).
\end{align}
We overload notation and denote the \emph{optimal acceptance probability} as $\alpha_k(p, q) \eqdef \alpha(p^{\otimes k}, q) = 1 - C(\pi^*)$. To better understand the quantity, we state a few properties about $\alpha_k$. 
\begin{lemma}(Appendix~\ref{app:props})
\label{lem:props}
The optimal acceptance probability statisfies the following properties.
\begin{itemize}
    \item \textbf{Monotonicity.} For any $p, q$ and $k \ge 1$, $\alpha_k(p,q) \le \alpha_{k+1}(p,q)$.
    \item \textbf{Consistency.} If $\forall x \in \Omega$, $q(x)/p(x)$ is bounded, we have $\lim_{k \to \infty} \alpha_{k}(p, q) = 1.$
    Else,
    $
        \lim_{k \to \infty}  \alpha_{k}(p, q) = \sum_{x \in \Omega}  \indic{p(x) > 0} q(x).
    $
\end{itemize}
\end{lemma}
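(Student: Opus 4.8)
The plan is to treat the two properties separately, using only elementary coupling constructions rather than any structural knowledge of the optimal plan $\pi^*$. For \textbf{monotonicity}, I would take an optimal coupling $\pi^\star \in \Pi(p^{\otimes k}, q)$ attaining $\alpha_k(p,q)$ and \emph{lift} it to a coupling in $\Pi(p^{\otimes(k+1)}, q)$ that simply ignores the extra draft: draw $(X_1,\dots,X_{k+1}) \sim p^{\otimes(k+1)}$ and then sample $Y$ from the conditional $\pi^\star(\cdot \mid X_1,\dots,X_k)$. The $Y$-marginal is unchanged because averaging over the free coordinate $X_{k+1}$ leaves $\sum_{x_1,\dots,x_k} p^{\otimes k}(x_1,\dots,x_k)\,\pi^\star(y\mid x_1,\dots,x_k) = q(y)$, so this is a valid coupling. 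Since $S(X_1,\dots,X_k) \subseteq S(X_1,\dots,X_{k+1})$ always holds, the event $\{Y \in S(X_1,\dots,X_k)\}$ entails $\{Y \in S(X_1,\dots,X_{k+1})\}$; hence the lifted coupling has acceptance at least $\alpha_k(p,q)$, and since $\alpha_{k+1}(p,q)$ maximizes acceptance over $\Pi(p^{\otimes(k+1)}, q)$, we conclude $\alpha_{k+1}(p,q) \ge \alpha_k(p,q)$.

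For \textbf{consistency}, I would sandwich $\alpha_k$. For the upper bound, note that under \emph{every} coupling $S(X) \subseteq \supp{p}$ with probability one, so $Y \in S(X)$ forces $Y \in \supp{p}$, and therefore $\alpha_k(p,q) \le \prob(Y \in \supp{p}) = \sum_x \indic{p(x) > 0}\, q(x)$, uniformly in $k$. For the matching lower bound, the key observation is that the plain \emph{independent} coupling $p^{\otimes k} \otimes q$ is already asymptotically optimal and needs none of the machinery of the optimal transport plan. Under independence, $\alpha(p^{\otimes k}\otimes q) = \sum_y q(y)\,\prob_{X\sim p^{\otimes k}}(y \in S(X)) = \sum_y q(y)\big(1 - (1-p(y))^k\big)$, and since this is one feasible coupling, $\alpha_k(p,q)$ is at least this quantity.

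To finish, monotonicity together with the uniform upper bound shows $\lim_{k\to\infty}\alpha_k(p,q)$ exists. By dominated convergence (each summand is bounded by the summable $q(y)$, and $(1-p(y))^k \to 0$ precisely when $p(y) > 0$), the independent-coupling lower bound converges to $\sum_{y:\,p(y)>0} q(y) = \sum_y \indic{p(y)>0}\,q(y)$, which matches the upper bound; hence $\lim_k \alpha_k(p,q) = \sum_y \indic{p(y)>0}\,q(y)$. When $q(x)/p(x)$ is bounded, $q(x) > 0$ forces $p(x) > 0$, so $\supp{q} \subseteq \supp{p}$ and the limit collapses to $\sum_{y \in \supp{q}} q(y) = 1$, giving the first case. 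The one subtle point I would flag is that monotonicity of the maximum $\alpha_k$ genuinely requires the lifting argument — monotonicity of the independent coupling's acceptance in $k$ alone does not imply monotonicity of the optimum. Conceptually, the only real idea here is the mild surprise that the trivial independent coupling already pins down the limit, so no analysis of $\pi^\star$ is needed for consistency; that is the step I would expect a reader to find least obvious and hence the one to spell out carefully.
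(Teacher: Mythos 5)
Your proof is correct. For \textbf{monotonicity} you follow essentially the same route as the paper: the paper lifts an arbitrary $\pi \in \Pi(p^{\otimes k}, q)$ to $\pi'(x^{k+1}, y) = \pi(x^k, y)\,p(x(k+1))$, which is exactly your ``ignore the extra draft'' construction, and then uses $S(X^k) \subseteq S(X^{k+1})$. For \textbf{consistency}, however, you take a genuinely different and more elementary route. The paper splits into two cases: when $q(x)/p(x)$ is bounded, it invokes the \kSeq\ machinery (\cref{thm:k_rej} and \cref{lem:gamma_star}), running \cref{alg:k_rej} with $\gamma = \gamma_{\max} = \max_x q(x)/p(x)$ to exhibit a feasible coupling with acceptance $1 - (1-\beta_{p,q}(\gamma_{\max}))^k \to 1$; when the ratio is unbounded, it perturbs $q$ to a distribution $q'$ supported inside $\supp{p}$ and uses $d_{\rm TV}(q,q') = p_{\rm off}$ together with subadditivity of the transport cost to reduce to the bounded case. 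You instead observe that the trivial independent coupling $p^{\otimes k} \otimes q$ is feasible, giving $\alpha_k(p,q) \ge \sum_y q(y)\bigl(1 - (1-p(y))^k\bigr)$, and that this converges (by monotone/dominated convergence) to $\sum_y \indic{p(y)>0}\, q(y)$, which matches the support-based upper bound $\alpha_k(p,q) \le \sum_y \indic{p(y)>0}\, q(y)$ that holds uniformly in $k$ (a bound the paper leaves implicit in the unbounded case). Your route buys three things: it handles both cases of the lemma in one argument; it keeps the lemma self-contained, with no dependence on the \kSeq\ analysis; and it actually proves slightly more, since it only needs $\supp{q} \subseteq \supp{p}$ rather than a bounded likelihood ratio to conclude the limit is $1$. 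What the paper's route buys in exchange is a quantitatively better feasible coupling for finite $k$ --- the rate $1-(1-\beta_{p,q}(\gamma_{\max}))^k = 1-(1-1/\gamma_{\max})^k$ is typically far better than the independent coupling's $\sum_y q(y)(1-(1-p(y))^k)$ (e.g.\ for $p = q$ the former is $1$ while the latter is not) --- though this extra strength is irrelevant for the limit statement being proved. Your flagged subtlety is also right: monotonicity of the optimum needs the lifting argument applied to the optimizer, not monotonicity of any single family of couplings.
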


\new{The above properties demonstrate that for a large $k$, the value of $\alpha_{k}$ can become large. Hence increasing $k$ could increase the acceptance probability, leading to further speedups.}
We now focus on computing the optimal transport plan and the optimal acceptance probability.

\noindent \textbf{OTM via Linear programming.} Optimal transport in discrete domain has been studied extensively \citep{kantorovich1942translocation, pele2009fast, pmlr-v108-guo20a}, and it is shown that the optimal transport problem is equivalent to the following linear programming problem:
\begin{align}
    \text{min } \sum_{x \in \domain^k} \sum_{y \in \domain} & \pi(x, y) \indic{y \notin S(x)} \label{eqn:linear_opt} \qquad
    s.t. \;\; \pi \in \Pi(P, Q).
\end{align}
The linear program in~\eqref{eqn:linear_opt} has $|\Omega|^{k+1}$ variables and $|\Omega|^k + |\Omega|$ equality constraints (see \cref{def:coupling}).  Linear programming can be solved in time polynomial in the number of variables and constraints \citep{dantzig2002linear, pele2009fast, lee2014path},\footnote{\new{To our best knowledge, the best practical computation bound (through interior-point method) is $O(|\Omega|^{3k})$ \citep{pele2009fast} and the best theoretical computation bound is $O(|\Omega|^{2.5k})$ \citep{lee2014path}.}} implying the \theerthanew{following lemma.}

\begin{lemma} \label{thm:linear_program}
Given $p, q$ over $\domain$, the solution to \cref{eqn:ot_k} can be computed in time $O(|\Omega|^{O(k)})$.
\end{lemma}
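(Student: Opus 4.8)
The plan is to recognize \cref{eqn:ot_k} as a linear program in the entries of the coupling $\pi$, bound its size, and invoke an off-the-shelf polynomial-time LP solver. First I would observe that the transportation cost is a linear functional of $\pi$: expanding $C(\pi) = \EE_{X,Y\sim\pi}\left[\indic{Y \notin S(X)}\right] = \sum_{x \in \domain^k}\sum_{y\in\domain} \pi(x,y)\,\indic{y\notin S(x)}$, the coefficients $\indic{y\notin S(x)}$ are fixed constants determined by the pair $(x,y)$, so the objective is linear in the $|\domain|^{k+1}$ unknowns $\{\pi(x,y)\}$. Likewise the feasible set $\Pi(p^{\otimes k},q)$ is cut out by linear constraints (see \cref{def:coupling}): nonnegativity $\pi(x,y)\ge 0$, the $|\domain|^k$ row-marginal equalities $\sum_{y}\pi(x,y) = p^{\otimes k}(x)$, and the $|\domain|$ column-marginal equalities $\sum_{x}\pi(x,y) = q(y)$. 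Hence \cref{eqn:ot_k} is exactly the linear program \cref{eqn:linear_opt}.

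Next I would tally its dimensions: there are $N = |\domain|^{k+1}$ variables and $|\domain|^k + |\domain| = O(|\domain|^k)$ equality constraints, together with $N$ nonnegativity constraints. The feasible polytope is nonempty and bounded — for instance the independent coupling $\pi(x,y) = p^{\otimes k}(x)\,q(y)$ satisfies both marginal conditions and is nonnegative — so an optimum is attained, consistent with the existence of $\pi^*$ noted earlier in the section. I would then invoke a polynomial-time linear-programming algorithm, for concreteness an interior-point method such as that of \citep{lee2014path} (or the classical bounds of \citep{dantzig2002linear, pele2009fast}), which solves an LP to optimality in time polynomial in the number of variables, the number of constraints, and the bit-complexity of the input.

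Finally, to extract the claimed bound, I would note that since $N = |\domain|^{k+1}$ and the number of constraints is $O(|\domain|^k)$, every quantity feeding the solver's runtime is $|\domain|^{O(k)}$, and a fixed polynomial of $|\domain|^{O(k)}$ remains $|\domain|^{O(k)}$; this yields the stated $O(|\domain|^{O(k)})$ runtime.

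This lemma is essentially an instantiation of the standard equivalence between discrete optimal transport and linear programming, so there is no genuine obstacle; the only point needing care is the complexity model. The entries of $p^{\otimes k}$ are products of $k$ probabilities, so their bit-length grows linearly in $k$, and one should confirm that the LP solver's dependence on input bit-size still collapses into the $|\domain|^{O(k)}$ factor — which it does, since $k \le \log_{|\domain|} N$ and the bit-length is thus logarithmic in $N$. Everything else is direct bookkeeping.
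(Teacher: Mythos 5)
Your proposal is correct and follows essentially the same route as the paper: the paper likewise observes that \cref{eqn:ot_k} is the linear program \cref{eqn:linear_opt} with $|\Omega|^{k+1}$ variables and $|\Omega|^k + |\Omega|$ equality constraints, and then cites polynomial-time LP solvers \citep{dantzig2002linear, pele2009fast, lee2014path} to conclude the $O(|\Omega|^{O(k)})$ bound. Your additional remarks on feasibility (via the independent coupling) and on bit-complexity are sound refinements of bookkeeping the paper leaves implicit.
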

We refer to the optimal coupling obtained above as OTM-$k$ and denote it as $\pi^{{\rm OTM-}k}$.  
When $k=1$, there is a closed form expression for the optimal acceptance cost (see \cref{eqn:cost_k1}), whereas for larger values of $k$, we \new{are unaware of }a general closed form expression. In \cref{app:bound_upper}, we provide an information-theoretic upper (and lower) bound, \new{which is tight up to a multiplicative constant of $1 - (1 - 1/k)^k \ge 1 - 1/e$}.

While solving OTM in \cref{eqn:linear_opt} gives the plan with optimal acceptance probability, to the best of our knowledge, the best-known runtime will be exponential in $k$, which can be prohibitive when either the vocabulary size $|\Omega|$ or the number of draft tokens $k$ is large.\footnote{\new{For discrete OT, Sinkhorn algorithm could be used to solve an entropy-regularized version of OT, which has a better computation complexity~\citep{Cuturi13lightspeed}. However, the computation cost of the algorithm will still have a linear dependence on $|\Omega|^k$, which can be prohibitive.}} In the next section, we will present a selection algorithm \new{that can be efficiently computed} and show that it achieves an acceptance probability of at least $(1 - (1 - 1/k)^k) \alpha_k \ge (1 - 1/e) \alpha_k$.

\section{\new{Draft selection} via $k$-sequential selection}
\label{sec:approx}
In this section, we present \theerthanew{a} sequential selection algorithm (\textsc{k-Seq}), an approximate solution\footnote{\new{Note here that the solution still satisfies the constrains in \cref{eqn:ot_k}, and hence is a valid transport plan. The term \textit{approximate} here means that the solution is not the exact minimizer of the cost in \cref{eqn:ot_k}.}} to the optimal transport problem in \cref{eqn:ot_k}, which can be efficiently computed in time almost linear in $|\domain|$ and logarithmic in $k$. The algorithm is presented in \cref{alg:k_rej}. 

\begin{algorithm}[ht]
\caption{$k$-sequential selection algorithm (\kSeq).}
\label{alg:k_rej}
\begin{algorithmic}[1]
\REQUIRE{Distributions $p, q$, samples $X_1, \ldots, X_k \sim_{i.i.d.} p$. $\gamma \in [1, k]:$ division factor.} \STATE Let $\beta_{p,q}(\gamma) = \sum_{x \in \domain} \min ( p(x), q(x)/\gamma)$ and $\psucc = 1  -  (1-\beta_{p, q}(\gamma))^k.$ Compute $\pres$ where 
\begin{equation}\label{eqn:pres_krs}
    \forall x \in \domain, \pres(x) = \frac{q(x) - \min\left\{p(x), \frac{q(x)}{\gamma} \right\} \frac{\psucc}{\beta_{p,q}(\gamma)} }{1 - \psucc}.
\vspace{-10pt}
\end{equation}
\FOR{$i = 1, 2, \ldots, k$}
\STATE Sample $\eta_i \sim U(0,1)$.
\IF{$\eta_i \leq \min \left(1, \frac{q(X_i)}{\gamma \cdot p(X_i)}\right)$}
\STATE \textbf{Return} $Y = X_i$. \coloredcomment{\textit{\new{Return the $i$th draft token.}}}
\ENDIF
\ENDFOR
\STATE \textbf{Return} $Y \sim \pres$. \coloredcomment{\textit{\new{Sample a corrected token from the residual distribution.}}}
\end{algorithmic}
\end{algorithm}

At a high-level, the algorithm goes over all $k$ draft samples generated from $p$ sequentially, and decides on whether to accept each $X_i$ based on the ratio $q(X_i)/p(X_i)$. The algorithm output the first accepted sample or result from a residual distribution $\pres$ if none of the samples is accepted. %
To guarantee that the the final returned token is a valid sample from $q$, we choose an appropriate $\gamma \in [1, k]$ and accept $X_i$ with probability $\min (1, q(X_i)/(\gamma \cdot p(X_i)))$ instead of $\min (1, q(X_i)/(p(X_i)))$ as in \cref{alg:rej}. %
In \cref{thm:k_rej}, we show that with appropriately chosen $\gamma$'s, \cref{alg:k_rej} is indeed valid transportation plans from $p^{\otimes k}$ to $q$. Moreover, to find the best transportation plan within the family, we only need to search over a single parameter $\gamma$, which reduces the computation cost significantly. We also show that searching over this sub-family of couplings won't decrease the optimal acceptance probability by a multiplicative constant.
The performance of \cref{alg:k_rej} is stated in \cref{thm:k_rej}.

\begin{theorem}
\label{thm:k_rej}%
Let 
$\beta_{p, q}(\gamma) = \sum_{x \in \domain} \min \big( p(x), \frac{q(x)}{\gamma} \bigr)$ and $\gamma^*$ be the solution to the identity below.
\vspace{-3pt}
\begin{equation}\label{eqn:beta_equality}
    1  - (1-\beta_{p, q}(\gamma))^k = \gamma \beta_{p, q}(\gamma).
\end{equation}
When $\gamma \ge \gamma^*$, the coupling $\pi^{\kSeq}_\gamma$ in Algorithm~\ref{alg:k_rej} is a valid transport plan from $p^{\otimes k}$ to $q$.
When $\gamma = \gamma^*$, we have
\vspace{-5pt}
\[
    \alpha(\pi^{\textsc{k-Seq}}_{\gamma^*}) \ge (1 - e^{-1}) \alpha_k(p,q).
\]
Moreover, $\gamma^*$ can be computed \new{up to accuracy $\delta$} in time \theerthanew{$O(|\domain|\log ((k-1)/\delta))$}.
\end{theorem}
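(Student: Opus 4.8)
The plan is to prove the three assertions of \cref{thm:k_rej} in turn: validity of $\pi^{\kSeq}_\gamma$ on $\{\gamma\ge\gamma^*\}$, the $(1-1/e)$ ratio at $\gamma=\gamma^*$, and the running time for $\gamma^*$. For validity I would first compute the output law of \cref{alg:k_rej}. Writing $\beta(\gamma)=\beta_{p,q}(\gamma)$, the joint event that a fixed draft equals $x$ and is accepted has probability $p(x)\min(1,q(x)/(\gamma p(x)))=\min(p(x),q(x)/\gamma)$, so each draft is accepted with marginal probability $\beta(\gamma)$. Since the drafts are i.i.d. and the first acceptance is returned, summing the geometric series over the possible position of the first accepted draft, $\sum_{i=1}^k(1-\beta(\gamma))^{i-1}\min(p(y),q(y)/\gamma)$, yields $\Pr(Y=y,\ \text{accepted})=\min(p(y),q(y)/\gamma)\,\psucc/\beta(\gamma)$ and hence $\psucc=1-(1-\beta(\gamma))^k$. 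Adding the residual branch and substituting \eqref{eqn:pres_krs} telescopes to $\Pr(Y=y)=\min(p(y),q(y)/\gamma)\,\psucc/\beta(\gamma)+(1-\psucc)\,\pres(y)=q(y)$, and the same substitution gives $\sum_y\pres(y)=1$.

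\textbf{Validity region.} The only genuine constraint is $\pres\ge 0$. Using $\min(p(y),q(y)/\gamma)\le q(y)/\gamma$, the numerator of $\pres(y)$ is at least $q(y)(1-\psucc/(\gamma\beta(\gamma)))$, so $\pres\ge0$ for every $y$ exactly when $\psucc\le\gamma\beta(\gamma)$. The structural fact that makes the valid set an upper interval is that $\gamma\beta(\gamma)=\sum_x\min(\gamma p(x),q(x))$ is non-decreasing in $\gamma$ while $\psucc=1-(1-\beta(\gamma))^k$ is non-increasing; hence $\psucc/(\gamma\beta(\gamma))$ is non-increasing, equals $1$ at $\gamma^*$ by the defining identity \eqref{eqn:beta_equality}, and is therefore $\le1$ for every $\gamma\ge\gamma^*$. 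The union bound $1-(1-\beta(\gamma))^k\le k\beta(\gamma)$ shows $\psucc/(\gamma\beta(\gamma))\le1$ already at $\gamma=k$, so $\gamma^*\in[1,k]$ and the family is nonempty.

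\textbf{Approximation.} At $\gamma^*$ the plan has acceptance probability $\alpha(\pi^{\kSeq}_{\gamma^*})\ge\psucc(\gamma^*)\ge\psucc(k)=1-(1-\beta(k))^k$, using that $\psucc$ is non-increasing and $\gamma^*\le k$. Since $t\mapsto1-(1-t/k)^k$ is concave on $[0,k]$ and vanishes at $0$, it dominates its chord, giving $1-(1-\beta(k))^k\ge(1-1/e)\,k\beta(k)$, where $k\beta(k)=\sum_x\min(kp(x),q(x))\le1$. It remains to bound the optimum: for any coupling $\Pr(Y=y,\,y\in S(X))\le\min(q(y),1-(1-p(y))^k)$, so $\alpha_k(p,q)\le\sum_y\min(q(y),1-(1-p(y))^k)\le\sum_y\min(q(y),kp(y))=k\beta(k)$ by the term-wise union bound $1-(1-p(y))^k\le kp(y)$ (this is the upper bound of \cref{app:bound_upper}). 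Chaining these inequalities gives $\alpha(\pi^{\kSeq}_{\gamma^*})\ge(1-1/e)\,k\beta(k)\ge(1-1/e)\,\alpha_k(p,q)$.

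\textbf{Computation and main obstacle.} To find $\gamma^*$, observe $g(\gamma)\eqdef\gamma\beta(\gamma)-\psucc(\gamma)$ is non-decreasing (a non-decreasing minus a non-increasing term) with $g(1)\le0\le g(k)$, so I would bisect on $[1,k]$: each evaluation forms $\beta_{p,q}(\gamma)=\sum_x\min(p(x),q(x)/\gamma)$ in $O(|\domain|)$ and $(1-\beta(\gamma))^k$ in $O(\log k)$ by fast exponentiation, and reaching width $\delta$ needs $O(\log((k-1)/\delta))$ bisections, for a total of $O(|\domain|\log((k-1)/\delta))$. I expect the main obstacle to be the validity region: establishing that the single scalar test $\psucc\le\gamma\beta(\gamma)$ is both necessary and sufficient for $\pres\ge0$ and that it carves out exactly $\{\gamma\ge\gamma^*\}$. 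Everything downstream—the approximation ratio, the membership $\gamma^*\in[1,k]$, and the correctness and cost of the binary search—rides on the same opposite monotonicities of $\gamma\beta(\gamma)$ and $\psucc(\gamma)$, so that lemma is the crux of the whole theorem.
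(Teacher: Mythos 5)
Your proposal is correct and takes essentially the same route as the paper's proof: the same computation of the output law with the residual correction for validity, the same opposite monotonicities of $\gamma\beta_{p,q}(\gamma)$ (non-decreasing) and $\psucc$ (non-increasing) — the content of the paper's \cref{lem:gamma_star} — to carve out $\{\gamma \ge \gamma^*\}$ and justify bisection, and the same chaining of $\psucc(\gamma^*)\ge\psucc(k)$ against the information-theoretic upper bound $\alpha_k(p,q)\le\sum_x\min\{kp(x),q(x)\}$ (the paper's \cref{thm:bound} with $\domain_0=\domain$, which you re-derive inline); your chord/concavity step is just a repackaging of the paper's monotonicity of $(1-(1-x)^k)/(kx)$. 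Two trivial loose ends worth tightening: only sufficiency of $\psucc\le\gamma\beta_{p,q}(\gamma)$ for $\pres\ge 0$ is proved (and is all that is needed — drop the ``exactly when''), and the asserted endpoint $g(1)\le 0$ should be justified by $1-(1-\beta)^k\ge\beta$ for $\beta\in[0,1]$, $k\ge 1$.
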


We provide the proof in \cref{app:proof_kseq}.
\new{In \cref{sec:examples_alpha}, using a few canonical examples of distributions, we plot the acceptance probability of $\kSeq$ and compare it with the optimal acceptance probability $\alpha_k$. It can be shown that \kSeq~could have a strictly worse acceptance probability compared to the OTM solution for certain cases while there also exist non-trivial cases where \kSeq~achieves the optimal acceptance probability.}

\newcr{Concurrent and recent work of \cite{miao2023specinfer, eagle2023} has proposed another efficient algorithm for the draft selection phase. To the best of our knowledge, there is no optimality guarantee proved for their proposed algorithm. In \cref{sec:com2multiround}, we present its acceptance probability empirically for the canonical case of Bernoulli distributions, and show that both our proposed algorithms (OTM and \kSeq) have a higher acceptance probability.%
}

\section{\algnametitle: Application of OTM in autoregressive sampling}
\label{sec:spectr}
In this section, we describe how OTM can be used to speed up auto-regressive sampling, which we refer to as \algname\ sampling. Similar to speculative decoding, each \new{iteration} of \algname~can be decomposed into three phases \new{(\cref{fig:spectr_iteration})}:
\begin{enumerate}
    \item \textbf{Draft set construction.}
    Given current context $x^t$, use the draft model sample a set of $K$ draft sequences with length $L$, denoted by  $S = \{z^L \sim \cM_s(\cdot \mid x^t)\}$. We keep the conditional probabilities $\cM_s(y \mid x^t, z^i)$ for all $y \in \domain, i \le L$ and $z^L \in S$.
    \item \textbf{Conditional probability computation.} Compute the conditional probabilities on the next token for the large model $\cM_b(y \mid x^t, z^i)$ for all $y \in \domain, i \le L$ and $z^L \in S$ in parallel.
    \item \textbf{Draft selection.} Select first $L'$ of the $L$ tokens and set $x(t+i) = z(i)$ for $i \leq L'$ and some $z \in S$ given the set of draft sequences and the conditional probabilities from both models. \new{Sample a token from a \textit{residual} distribution as a correction to the rejected tokens.}
\end{enumerate}

The conditional probability computation step takes $O(1)$ when $|S|$ is not large based on our simplified computations model. We mainly focus on the draft set construction phase and draft selection phase.

\begin{algorithm}[h]
\caption{Draft selection with multiple candidates ({\selection}).} \label{alg:spek_plus}
\begin{algorithmic}[1]
\REQUIRE{Input sequence $x^t$;  draft sequence length: $L$; draft sequences $S = \{z_i^{L}\mid i \le |S| \}$.}
\STATE Compute a transport plan (using linear programming in \cref{thm:linear_program} for an optimal solution or \cref{alg:k_rej} for a suboptimal solution) from $\cM_s(\cdot\mid x^{t})^{\otimes |S|}$ to $\cM_b(\cdot\mid x^{t})$, denoted by $\pi_t$.
\STATE Get the multi-set of next token-level drafts: $S_z = \{z_i(1)\}_{i \in [|S|]}$ and compute $
Y = f_{\pi_t}(S_z).$
\IF{$L = 1$} 
    \IF{$Y \in S_z$}  
    \STATE Sample $Y' \sim \cM_b(\cdot \mid (x^{t}, Y))$.  \label{step:extra_token}
    \STATE \textbf{Return} $(x^{t}, Y, Y')$. \coloredcomment{\new{\textit{Sample an extra token if the last token is accepted.}}}
    \ELSE
    \STATE \textbf{Return} $(x^{t}, Y)$. \coloredcomment{\new{\textit{Stop and return the corrected token and previous accepted tokens.}}}
    \ENDIF
\ENDIF
\STATE Let $S_{\rm next} = \{z^{2:L} \mid z \in S \text{ and } z(1) = Y\}$ be the set that consists of sub-sequences of the candidates that agree with the selected next token.
\IF{$S_{\rm next} = \emptyset$}
\STATE \textbf{Return} $(x^{t}, Y)$. \coloredcomment{\new{\textit{Stop and return the corrected token and previous accepted tokens.}}}
\ELSE 
\STATE \textbf{Return} \selection$((x^{t}, Y), L - 1,S_{\rm next})$. \coloredcomment{\new{\textit{Keep the draft token and proceed to the next time step.}}}
\ENDIF
\end{algorithmic}
\end{algorithm}

\noindent \textbf{Draft set with \iid~draft sequences.} \new{Given} context $x^t$, a natural way to come up with a set of $K$ drafts is to independently sample $K$ draft sequences from $\cM_s(\cdot \mid x^t)$, \ie 
\begin{equation}\label{eqn:iid_draft}
    z_1^L, z_2^L, \ldots, z_K^L \sim_{i.i.d.} \cM_s(\underbrace{\cdot, \cdot, \ldots \cdot}_{L\text{ dots}} \mid x^{t}).
\end{equation}
The draft set construction method in \eqref{eqn:iid_draft} can be generalized to a prefix-tree based algorithm. However, this generalized version did not perform better in \new{our }experiments. We include this construction in \cref{sec:tree_sampling} for completeness.

\textbf{Draft selection with multiple candidates.} We present the \new{sequence-level} selection algorithm given a set of draft sequences in \cref{alg:spek_plus}. %
We assume the \theerthanew{conditional} probabilities on the next token \theerthanew{are} available given any prefix in the candidate set since they are computed \theerthanew{in parallel} in the second phase, and won't list them as inputs explicitly in \cref{alg:spek_plus}.

A sample run of the algorithm is presented in \cref{fig:spectr_main}. The algorithm proceeds in a recursive fashion. Given prompt $x^t$ and a candidate set $S$ sampled from $\cM_s(\cdot \mid x^t)$, the algorithm first computes \new{a token-level draft selection algorithm} $f_\pi: \domain^{|S|} \rightarrow \domain$ \new{which is a transport plan} from $\cM_s(\cdot\mid x^{t})^{\otimes |S|}$ to $\cM_b(\cdot\mid x^{t})$. Then $f_\pi$ is applied to \new{the set of first tokens of the draft sequences} in $S$ to obtained a valid token $Y$ from $\cM_b(\cdot\mid x^{t})$. If $Y$ is not the last token ($L \ge 2$), we filter out sequences in $S$ whose first token is not $Y$ and denote the remaining sequences as $S_{\rm next}$ and feed it to the algorithm with context $(x^{t}, Y)$ and draft length $L - 1$. This goes on until we have $L = 1$ or $S_{\rm next}  = \emptyset$.

In this case when $Y$ is the last token (\ie $L = 1$) and $Y \in S$, we have the choice to sample an additional token $\cM_b(\cdot \mid (x^t, Y))$ since this conditional probability is already computed in the second phase.
Due to the property of the \new{token-level selection algorithms and the autoregressive structure of language models, it can be shown} that $Y$ is always a valid sample from $\cM_b(\cdot\mid x^{t})$. 
\theerthanew{Let $L'$\ab{changing to remain consistent} be the number of decoded tokens in one iteration. Note that this is a random variable in the range $[1, L+1]$. 
}

\theerthanew{The formal quality guarantee is stated in \cref{thm:spek_plus}.}
We \new{present} the proof in \cref{app:proof_spek_plus}.

\new{\begin{theorem}\label{thm:spek_plus} 
Assume all drafts in the set $S$ are generated from the small model with input $x^{t}$, or more precisely,  $\forall z \in S,$
\vspace{-10pt}
\begin{align}\label{eqn:assump_drafts}
    \forall i \in [1, L], & \qquad z(i)\sim \cM_s(\cdot \mid x^{t}, z^{i-1}).
\end{align}
Let $(x^{t}, Y^{\tau})$ be the output of \cref{alg:spek_plus} where $\tau$ is the length of the \new{newly decoded tokens}, then it satisfies that $
Y^{1: \tau}$ \new{is distributed according to} $\cM_b(\underbrace{\cdot, \cdot, \ldots \cdot}_{\tau \text{ dots}} \mid x^{t})$. More precisely, For any $\tau_0 \in [1, L+1]$, and any $\tau_0$-length, 
sequence $o^{\tau_0} = (o(1),\ldots, o(\tau_0)) \in \domain^{\tau_0}$, we have
\[
\probof{Y^{\tau_0} =  o^{\tau_0} | \tau = \tau_0} = \Pi_{i = {1}}^{\tau_0} \cM_b(o(i)\mid x^{t}, o^{i-1}).
\]
\end{theorem}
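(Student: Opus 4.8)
The plan is to induct on the draft length $L$, peeling off the first decoded token at each level of the recursion in \cref{alg:spek_plus} and using token-level validity as the atomic fact. That atomic fact is the Claim of \cref{sec:ot} combined with the validity part of the transport-plan guarantees (\cref{thm:linear_program} for the LP/OTM plan and \cref{thm:k_rej} for \kSeq): whenever the first tokens fed to the plan are i.i.d.\ from $\cM_s(\cdot\mid x^t)$, the selected token $Y=f_{\pi_t}(S_z)$ is marginally distributed as $\cM_b(\cdot\mid x^t)$, whether it is accepted (lies in $S_z$) or drawn from the residual $\pres$. Each recursion level will emit the token $Y$ with this conditional law, and the recursion will supply the remaining tokens against the extended context.

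First I would treat the base case $L=1$. Here the algorithm computes $\pi_t$ from $\cM_s(\cdot\mid x^t)^{\otimes|S|}$ to $\cM_b(\cdot\mid x^t)$ and sets $Y=f_{\pi_t}(S_z)$; since the first tokens are i.i.d.\ from $\cM_s(\cdot\mid x^t)$ by \eqref{eqn:iid_draft}, validity gives $Y\sim\cM_b(\cdot\mid x^t)$. When $Y\in S_z$ the algorithm appends $Y'\sim\cM_b(\cdot\mid x^t, Y)$, so the emitted pair factorizes as $\cM_b(Y\mid x^t)\,\cM_b(Y'\mid x^t, Y)$ with $\tau=2=L+1$; when $Y\notin S_z$ it emits only $Y\sim\cM_b(\cdot\mid x^t)$ with $\tau=1$. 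This gives the desired product factorization for both admissible lengths.

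For the inductive step I would assume the statement for draft length $L-1$ and \emph{any} i.i.d.\ draft set, then run one level of \cref{alg:spek_plus}. After the selection produces $Y\sim\cM_b(\cdot\mid x^t)$ there are two outcomes: either $S_{\rm next}=\emptyset$, in which case the level emits the single token $Y$; or we recurse on context $(x^t, Y)$, length $L-1$, and the restricted set $S_{\rm next}=\{z^{2:L}: z\in S,\ z(1)=Y\}$. The crux is a conditional-distribution lemma: conditioned on the value $Y$ and on $|S_{\rm next}|=m$, the tails in $S_{\rm next}$ are i.i.d.\ draws obeying \eqref{eqn:assump_drafts} with context $(x^t, Y)$, i.e.\ $z(i)\sim\cM_s(\cdot\mid x^t, Y, z^{2:i-1})$. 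This holds because (i) the draft pairs $(z(1), z^{2:L})$ are i.i.d.\ with $z^{2:L}\mid z(1)$ following the autoregressive law of $\cM_s$; (ii) the map $Y=f_{\pi_t}(S_z)$ depends only on the first tokens and on auxiliary randomness independent of the tails; and (iii) conditioning on the first tokens leaves the matching tails conditionally independent with the correct conditional law. With this lemma, $S_{\rm next}$ is a bona fide i.i.d.\ draft set for the $(L-1)$-instance, the inductive hypothesis applies, and the recursion emits $Y^{2:\tau_0}$ with conditional law $\prod_{i=2}^{\tau_0}\cM_b(o(i)\mid x^t, o^{i-1})$; multiplying by $\cM_b(Y^1\mid x^t)$ yields the full product.

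I expect the main obstacle to be exactly this conditional-draft lemma together with the bookkeeping around the random stopping time $\tau$ and the random remaining-set size $m$. The delicate point is that the event on which we recurse (acceptance of $Y$, hence $S_{\rm next}\neq\emptyset$) is correlated with the value of $Y$, so one must argue carefully that conditioning on the first-level outcome does not disturb the conditional law of the tails; this is where the independence between the plan's internal randomness and the draft tails, and the product structure of the i.i.d.\ draft set, are essential. One must also verify that validity of the \emph{next} level's plan is not compromised by $m$ being random and $Y$-dependent, which is handled by conditioning on $\{|S_{\rm next}|=m\}$ before invoking token-level validity. Once the restricted tails are certified as a fresh valid draft set for context $(x^t, Y)$, the per-token autoregressive factorization chains through the recursion to give the stated identity.
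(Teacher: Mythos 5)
Your proposal is correct and takes essentially the same route as the paper's own proof: induction on $L$, using token-level transport-plan validity to get $Y \sim \cM_b(\cdot \mid x^t)$ for the first selected token, then treating the tails of the drafts matching $Y$ (the set $S_{\rm next}$) as a fresh draft set satisfying \eqref{eqn:assump_drafts} for the extended context $(x^t, Y)$ so that the induction hypothesis applies, and finally chaining the autoregressive factorization. The conditional-draft lemma you isolate as the crux is precisely the step the paper dispatches in a single sentence (``by the assumption in \eqref{eqn:assump_drafts}, $S_{\rm next}$ contains $k'$ drafts from $\cM_s(\cdot \mid x^t, Y_1)$ with length $\ell$''), so your write-up is, if anything, more explicit than the paper about the independence bookkeeping between the plan's internal randomness, the first tokens, and the tails.
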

}

\vspace{-5pt}
\section{Experiments}
\label{sec:experiments}
\ifnum\short=1
\vspace{-5pt}
\fi
\new{We empirically evaluate \theerthanew{\algname} and compare it with two methods: (1) the baseline auto-regressive decoding; and (2) speculative decoding with $K=1$. \theerthanew{Note that all three methods effectively generate samples from the same baseline large model, and hence the quality of the two speculative decoding methods is \textit{provably} neutral to that of the large model}. \theerthanew{Thus,} we will \theerthanew{only} focus on measuring the speedup in our experiments. In \theerthanew{the simplified computation}
model, we \theerthanew{made} the following assumptions: (1) Decoding time from small models is negligible compared to decoding from the small model; (2) Parallelization along the batch and time axis doesn't increase the time for a serial call to the large model. With these, the theoretical speedup compared to baseline decoding will be the average number of decoded tokens per serial call, which is called \textit{block efficiency} \citep{leviathan2022fast}, defined below
\[
    \text{Block efficiency} \,\, \eqdef \frac{\text{Total number of decoded tokens}}{\text{Number of serial calls to $\cM_b$}}.
\]
However, in real deployment of the \algname~algorithm, the actual end-to-end (wall clock) speedup is further impacted by the following aspects. (1) The decoding time for $\cM_s$ might not be negligible; (2) Parallelization along the batch and time axis \theerthanew{might} increase the time for a single call to $\cM_b$; (3) Overhead due to the implementation of additional functionalities in \algname~such as the draft selection algorithm and switching between models. These factors will depend on how the algorithm is implemented and optimized.}
\new{In our experiment, we consider both the block efficiency, and average wall clock speedup with our implementation of \algname.}

We first present the performance of our algorithm and compare it to speculative decoding \new{using state-of-the-art PALM-2 models with prompts from the one-billion language benchmark (LM1B) \citep{chelba2013one} . In \cref{app:experiments}, we use a pair of smaller transformer models to break down different affecting factors mentioned above.
\begin{table}[h]
\begin{center}
\caption{Experimental results on the LM1B dataset with PALM-2-Gecko as the small model and PALM-2-Bison as the large model. Results are averaged over $1000$ test prompts and $3$ random seeds. \label{tab:results_palm2}}
\begin{tabular}{ c | c | c | c | c}
 Algorithm & $K$ & $L$ & \new{Block efficiency}
 & Relative wall clock speedup \\ 
 &  &  & %
 & (normalized by baseline) \\\hline 
 Baseline & - & - & $1.0$  & $1.0$  \\  \hline
Speculative & 1 & 4 & $2.4$  & 1.67  \\ 
\algname & 8 & 4&  $\mathbf{3.1}$ & \textbf{2.08}  \\  \hline
Speculative & 1 & 8& $2.9$  & 1.56 \\
\algname & 8 & 8&  $\mathbf{4.0}$  &\textbf{2.13}  \\
\end{tabular}
\end{center}
\end{table}
}
\new{In \cref{tab:results_palm2}, we use PALM-2-Gecko and PALM-2-Bison as the small model and large model, respectively~\citep{palm2, palm2blog}. %
The wall clock speedup is normalized by the wall clock latency of baseline autoregressive decoding. The time we log include all above mentioned aspects. In the considered parameter configurations, the wall clock speedup increases as $K$ and $L$ increases.  As seen from the table, the actual wall clock speedup is smaller than the theoretical speedup of block efficiency, which is consistent with what we expected. Importantly, the benefit from \algname~outweighs these overheads. In particular, when $L = 8$ and $K = 8$, our proposed \algname~algorithm has a speedup of 2.13x, a further 1.37x increase compared to speculative decoding ($K = 1$).}

\section{Acknowledgements}

Authors thank Asaf Aharoni, Kwangjun Ahn, Badih Ghazi, Sanjiv Kumar,  Teodor Marinov, Michael Riley, and NeurIPS reviewers for  helpful comments and discussions.

\bibliographystyle{plain}
\bibliography{references}

\newpage
\appendix
\section{\theerthanew{Properties of optimal transport cost}}

\subsection{Information-theoretic upper (and lower) bound of $\alpha_k$.} \label{app:bound_upper}
Below we provide an information-theoretic upper (and lower) bound  in \cref{thm:bound}, \new{which is tight up to a multiplicative constant of $1 - (1 - 1/k)^k \ge 1 - 1/e$}. The proof is presented in \cref{app:bound}. For the case of $k=1$, the upper bound matches the optimal acceptance probability.
\new{
\begin{lemma}
\label{thm:bound} For any two distributions $p, q$ and $\forall k \geq 1$, we have $$(1 - (1 - 1/k)^k)  \cdot \bar{\alpha}_k(p,q)\le \alpha_k(p, q)\le \bar{\alpha}_k(p,q), $$ where 
    \begin{equation}\label{eqn:alpha_upper}
        \bar{\alpha}_k(p,q) =  \min_{\domain_0 \subset \domain} \left\{\sum_{y \in \domain_0} \min \left\{q(y), 1 -(1-p(y))^k \right\} + \sum_{x^k \in \domain^k} \min \{\prod^k_{i=1}p(x_i), \sum_{y \in x^k \cap \domain_0^c} q(y) \} \right\}.
    \end{equation}
\end{lemma}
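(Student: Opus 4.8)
The plan is to prove the two inequalities separately, treating the upper bound as the more fundamental of the two. The upper bound $\alpha_k(p,q) \le \bar\alpha_k(p,q)$ should follow from a duality/feasibility argument: for \emph{any} fixed subset $\domain_0 \subset \domain$, I want to exhibit a linear constraint that every valid coupling $\pi \in \Pi(p^{\otimes k}, q)$ must satisfy, and that constraint bounds $\alpha(\pi)$ by the bracketed expression. The natural choice is to split the accepted mass according to whether the output token $y$ lands in $\domain_0$ or its complement $\domain_0^c$. For the $\domain_0$ part, the accepted mass on any $y \in \domain_0$ is at most $q(y)$ (it is a valid sample from $q$) and also at most the probability that $y$ appears among the $k$ drafts, which is $1 - (1-p(y))^k$; hence it is bounded by the minimum of the two. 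For the $\domain_0^c$ part, I would instead bound the contribution per draft configuration $x^k$: the total mass assigned to $x^k$ is $\prod_i p(x_i)$, and the accepted mass routed to $\domain_0^c$ from this configuration cannot exceed $\sum_{y \in x^k \cap \domain_0^c} q(y)$ by the membership constraint $y \in S(x)$ together with the column constraint. Summing these two bounds and minimizing over $\domain_0$ gives $\bar\alpha_k$.

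First I would set up the decomposition carefully. Writing $\alpha(\pi) = \sum_{x^k}\sum_{y \in S(x^k)} \pi(x^k, y)$, I split the inner sum over $y \in S(x^k)\cap\domain_0$ and $y \in S(x^k)\cap\domain_0^c$. For the first group I reorganize the sum by $y$ first: $\sum_{y\in\domain_0}\sum_{x^k : y \in S(x^k)} \pi(x^k,y) \le \sum_{y\in\domain_0}\min\{q(y),\,\Pr[y \in S(X^k)]\}$, where the $q(y)$ bound comes from the column constraint $\sum_{x^k}\pi(x^k,y)=q(y)$ and the probability bound comes from the fact that $\pi(x^k,y)=0$ whenever $y\notin S(x^k)$ together with the row marginals giving $\Pr[y \in S(X^k)] = 1-(1-p(y))^k$. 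For the second group I keep the sum organized by $x^k$: $\sum_{x^k}\sum_{y\in S(x^k)\cap\domain_0^c}\pi(x^k,y) \le \sum_{x^k}\min\{\prod_i p(x_i),\ \sum_{y\in x^k\cap\domain_0^c} q(y)\}$, where $\prod_i p(x_i)$ is the row total and the other term is the total $q$-mass available on the eligible output tokens. Adding the two halves yields the bracketed quantity for that particular $\domain_0$, and since $\pi$ was arbitrary and $\domain_0$ was arbitrary, taking the minimum over $\domain_0$ gives the upper bound.

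For the lower bound $\alpha_k(p,q) \ge (1-(1-1/k)^k)\bar\alpha_k(p,q)$, I would invoke \cref{thm:k_rej}: the $\kSeq$ algorithm already furnishes a \emph{valid} coupling achieving acceptance at least $(1-e^{-1})\alpha_k$, but more usefully the factor $1-(1-1/k)^k$ is exactly the gap appearing in that theorem's analysis. The cleanest route is to construct a valid coupling directly whose acceptance is at least $(1-(1-1/k)^k)$ times the optimizing $\domain_0$'s bracketed value. Concretely, I would take the $\domain_0^*$ achieving the minimum in \eqref{eqn:alpha_upper} and build a transport plan that accepts a draft whenever the target token lands in the feasible region, scaling the acceptance on each token by a uniform factor so that validity (the column marginals equal $q$) is preserved; the worst-case loss from this scaling is precisely the $1-(1-1/k)^k$ factor, arising because a token appearing with draft-marginal mass $1-(1-p(y))^k$ can be matched to $q$-mass with efficiency bounded below by this ratio in the relevant regime. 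I expect this lower-bound construction to be the main obstacle, since turning the combinatorial minimum over $\domain_0$ into an explicit feasible coupling with a clean multiplicative guarantee requires carefully handling the interaction between the per-token bound $1-(1-p(y))^k$ and the per-configuration bound, and verifying that the scaling simultaneously respects both marginal families. The upper bound, by contrast, is a relatively routine relaxation argument once the two-way decomposition is in place.
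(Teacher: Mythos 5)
Your upper-bound argument is correct and is essentially the paper's own proof: the same split of the accepted mass $\probof{Y \in S(X^k)}$ according to whether $Y \in \domain_0$ or $Y \in \domain_0^c$, with the first group organized by output token $y$ (bounded by $q(y)$ via the column marginal and by $1-(1-p(y))^k$ via the row marginal) and the second group organized by draft configuration $x^k$ (bounded by the row total $\prod_i p(x_i)$ and by the available $q$-mass on $S(x^k)\cap\domain_0^c$).

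The lower bound is where your proposal has a genuine gap, and it is exactly the step you flag as ``the main obstacle.'' Your plan --- fix the minimizing set $\domain_0^*$, then build a transport plan adapted to it whose acceptance loses at most a factor $1-(1-1/k)^k$ --- is never actually constructed: the claim that a uniform scaling of per-token acceptance preserves the column marginals while losing only this factor is precisely what needs proof, and interleaving it with the per-configuration term of the bracket makes it harder still. But all of this is unnecessary, for a structural reason you overlooked: $\bar\alpha_k$ is a \emph{minimum} over $\domain_0$, so to prove $\alpha_k \ge (1-(1-1/k)^k)\,\bar\alpha_k$ it suffices to exhibit a valid coupling whose acceptance beats $(1-(1-1/k)^k)$ times the bracket at one \emph{convenient} choice of $\domain_0$. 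The paper takes $\domain_0 = \domain$, for which the second sum vanishes and the bracket is simply $\sum_y \min\{q(y),\, 1-(1-p(y))^k\}$; since this bracket is at least $\bar\alpha_k$, the desired bound follows. The concrete coupling realizing the scaling is not new either: it is $\pi^{\kSeq}_{\gamma^*}$, and the intermediate inequality \eqref{eqn:kseq_guarantee_to_upper} in the proof of \cref{thm:k_rej} already gives
\[
\alpha_k(p,q) \;\ge\; \alpha\bigl(\pi^{\kSeq}_{\gamma^*}\bigr) \;\ge\; \Paren{1-(1-1/k)^k}\sum_x \min\{k\,p(x),\, q(x)\} \;\ge\; \Paren{1-(1-1/k)^k}\sum_x \min\{1-(1-p(x))^k,\, q(x)\},
\]
where the last step uses $kt \ge 1-(1-t)^k$. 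Note also that the form of \cref{thm:k_rej} you invoke, $\alpha(\pi^{\kSeq}_{\gamma^*}) \ge (1-e^{-1})\,\alpha_k(p,q)$, is useless for this purpose: it bounds acceptance relative to $\alpha_k$ rather than $\bar\alpha_k$, so plugging it in yields only the vacuous $\alpha_k \ge (1-e^{-1})\alpha_k$. One needs the sharper per-token bound displayed above, which is what the paper cites.
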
}

In \cref{sec:examples_alpha}, we plot $\alpha_k$ as a function of $k$ for a few simple pairs of $(p,q)$'s as illustrative examples. We note that the upper bound in \cref{thm:bound} is tight for examples considered in \cref{sec:examples_alpha}. 

\subsection{Proof of Lemma~\ref{lem:props}}
\label{app:props}

We first prove \textit{monotonicity}. By definition, 
\begin{align*}
    \alpha_{k}(p,q) & = 1 - \min_{\pi \in \Pi(p^{\otimes {k}}, q)} {\rm Pr}_{X^{k}, Y \sim \pi}\Paren{Y \notin S(X^{k})} \\
   & = \max_{\pi \in \Pi(p^{\otimes k}, q)} {\rm Pr}_{X^{k}, Y \sim \pi}\Paren{Y \in S(X^{k})} \\
\end{align*}
Moreover, for any $\pi \in  \Pi(p^{\otimes{k}}, q)$, we can construct $\pi' \in \Pi(p^{\otimes {k+1}}, q)$ by setting
\[
\forall x^{k+1} \in \domain^{k+1}, y \in \domain, \pi'(x^{k+1}, y) =  
\theerthanew{\pi(x^k, y) p(x(k+1)),}
\]
\ie adding and independent sample from $p$ to $X^k$.

Hence we have 
\begin{align*}
    \alpha_{k}(p,q) & =  \max_{\pi \in \Pi(p^{\otimes k}, q)} {\rm Pr}_{X^{k}, Y \sim \pi}\Paren{Y \in S(X^{k})} \\ 
    & = \max_{\pi \in \Pi(p^{\otimes k}, q)} {\rm Pr}_{X^{k+1}, Y \sim \pi'}\Paren{Y \in S(X^{k})} \\
    & \le \max_{\pi \in \Pi(p^{\otimes k}, q)} {\rm Pr}_{X^{k+1}, Y \sim \pi'}\Paren{Y \in S(X^{k+1})}\\
    & \le \max_{\pi' \in \Pi(p^{\otimes {k+1}}, q)} {\rm Pr}_{X^{k+1}, Y \sim \pi'}\Paren{Y \in S(X^{k+1})} \\
    & = \alpha_{k+1}(p,q).
\end{align*}

Next we prove \textit{consistency}. We start with the case when $\forall x \in \domain, q(x)/p(x) < \infty$. To prove this, we will show that \cref{alg:k_rej} with $\gamma_{\max} = \max_{x\in \domain} q(x)/p(x)$ statisifies
\[
    \lim_{k \rightarrow \infty} \alpha(\pi_{\gamma_{\max}}^{\kSeq}) = 1.
\]
\theerthanew{Since $\alpha(\pi_{\gamma_{\max}}^{\kSeq}) \le \alpha_k(p, q)$, the above equation implies $\lim_{k \rightarrow \infty} \alpha_k(p, q) = 1$.}
Notice that by \cref{lem:gamma_star} and \cref{thm:k_rej}, $\pi_{\gamma_{\max}}^{\kSeq}$ is a valid coupling, and
\[
    \alpha(\pi_{\gamma_{\max}}^{\kSeq}) = 1 - (1 - \beta_{p,q}(\gamma_{\max}))^k,
\]
where $\beta_{p, q}(\gamma) = \sum_{x \in \domain} \min ( p(x), \frac{q(x)}{\gamma} ) \ge 1/\gamma_{\max} > 0$. Taking $k \rightarrow \infty$ concludes the proof.

For the case when $q(x)/p(x)$ is unbounded, there exists $x \in \domain$ such that $q(x) > 0$ and $p(x) = 0$. Let
\[
    p_{\rm off} =  \sum_{x \in \Omega}  \indic{p(x) = 0} q(x).
\]
Let $x_0$ be such that $p(x_0) > 0$. We define $q'$ such that
\[
 q' = \begin{cases}
 0, & \text{ if } p(x) = 0,\\
 q(x), & \text{ if } p(x) > 0 \text{ and } x\neq x_0,\\
 q(x) + p_{\rm off} & \text{ if } x = x_0.
 \end{cases}
\]
Then we have $d_{\rm TV}(q, q') = p_{\rm off}$, and hence by subadditivity of transport cost, 
\[
    \alpha_k(p,q) \ge \alpha_k(p,q') - p_{\rm off}.
\]
Moreover, we have $\forall x \in \domain, q'(x)/p(x) < \infty$. Hence
\[
    \lim_{k \to \infty}  \alpha_{k}(p, q) \ge \lim_{k \to \infty}  \alpha_{k}(p, q') -  p_{\rm off} = 1 - p_{\rm off} = \sum_{x \in \Omega}  \indic{p(x) > 0} q(x).
\]

\subsection{Proof of \cref{thm:bound}}
\label{app:bound}
\new{For the upper bound, }it would be enough to show that for any $\pi \in \Pi(p^{\otimes k}, q)$, and any $\domain_0 \subset \domain$, we have
\[
    \probof{Y \in S(X^k)}\le \sum_{y \in \domain_0} \min \left\{q(y), 1 -(1-p(y))^k \right\} + \sum_{x^k \in \domain^k} \min \{\prod^k_{i=1}p(x_i), \sum_{y \in S(x^k) \cap \domain_0^c} q(y) \}.
\]
\begin{align*}
        & \;\;\;\; \probof{Y\in S(X^k)} \\
        & = \sum_{y \in \domain}  \sum_{x^k \in \domain^k} \probof{X^k = x^k, Y = y} \cdot \indic{y \in S(x^k)}\\
    & = \sum_{y \in \Omega_0}  \sum_{x^k \in \domain^k} \probof{X^k = x^k, Y = y} \cdot \indic{y \in S(x^k)} \\
    & + \sum_{y \in \Omega_0^c}  \sum_{x^k \in \domain^k} \probof{X^k = x^k, Y = y} \cdot \indic{y \in S(x^k)} \\
    & = \sum_{y \in \Omega_0}  \probof{y\in S(x^k), Y = y} +  \sum_{x^k \in \domain^k} \sum_{y \in S(x^k) \cap \Omega_0^c}  \probof{X^k = x^k, Y = y} \\
    & \le \sum_{y \in \Omega_0} \min\{\probof{y\in S(x^k)}, q(y)\} + \sum_{x^k \in \domain^k} \min\{\probof{X^k = x^k},\sum_{y \in S(x^k) \cap \Omega_0^c} q(y) \} \\
    & = \sum_{y \in \Omega_0} \min\{1 - (1-p(y))^k , q(y)\} + \sum_{x^k \in \domain^k} \min\{\prod^k_{i=1}p(x_i), \sum_{y \in S(x^k) \cap \Omega_0^c} q(y) \}.
\end{align*}

\new{For the lower bound,  we show that $\kSeq$ achieves an acceptance probability of at least $(1 - (1-1/k)^k) \bar{\alpha}_k(p,q)$, see \cref{eqn:kseq_guarantee_to_upper}, implying the lower bound guarantee.}
\section{Comparison between $\alpha(\pi^{\textsc{k-Seq}}_{\gamma^*})$ and $\alpha_k$ for simple examples.}
\label{sec:examples_alpha}

We illustrate the acceptance probabilities for our proposed token-level selection algorithms using a few simple examples and plot them in Figures~\ref{fig:ber_q} and~\ref{fig:uniform_k}. The analysis for these simple distributions is presented in Appendix~\ref{app:figures} and \cref{sub_app:examples1}. %

\begin{figure}[h]
\centering
\label{fig:simple}
\begin{minipage}{0.45\textwidth}
\centering
\includegraphics[width=0.9\linewidth]{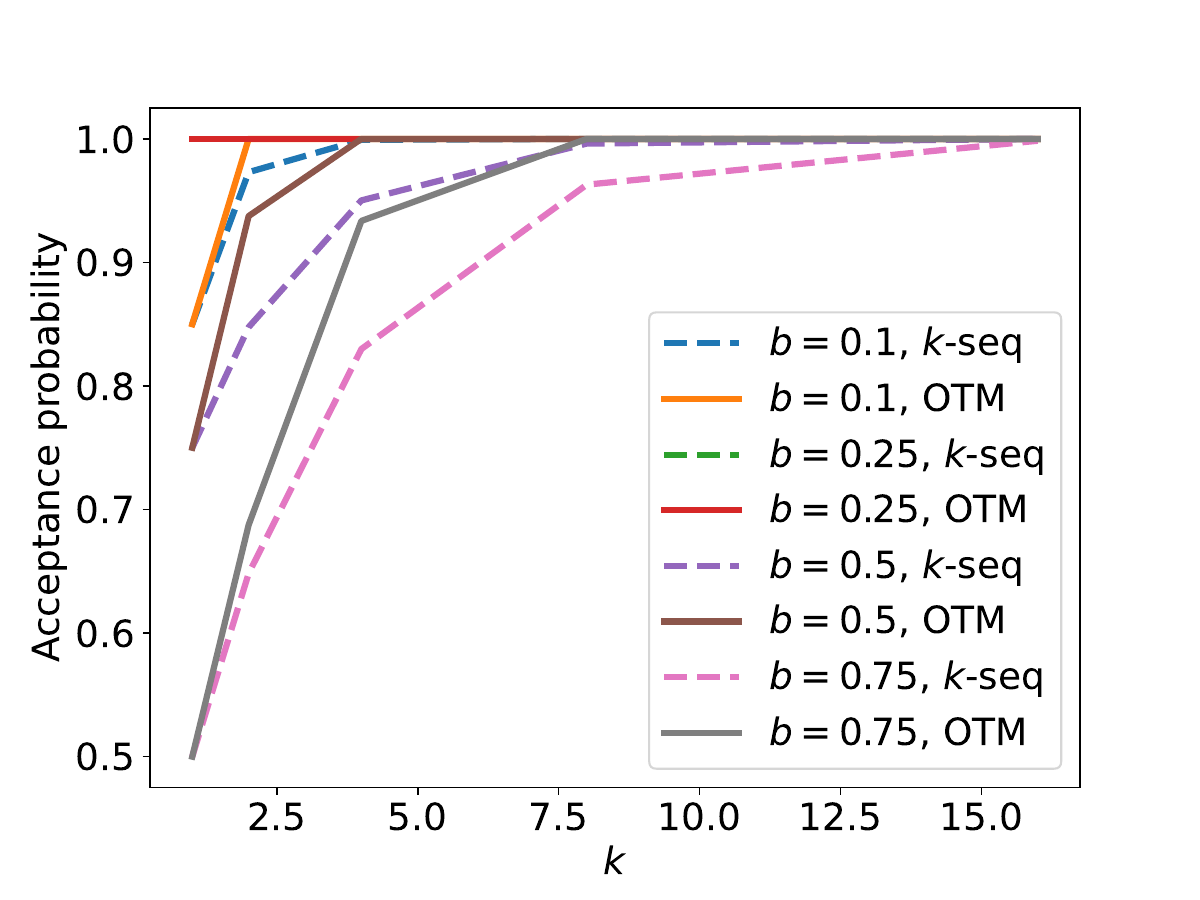}
\caption{Acceptance probability comparison between \theerthanew{OTM-$k$} and $\kSeq$ when $p=\text{Ber}(0.25)$ and $q = \text{Ber}(b)$.}
\label{fig:ber_q}
\end{minipage}
\qquad
\begin{minipage}{0.45\textwidth}
\centering
\includegraphics[width=0.9\linewidth]{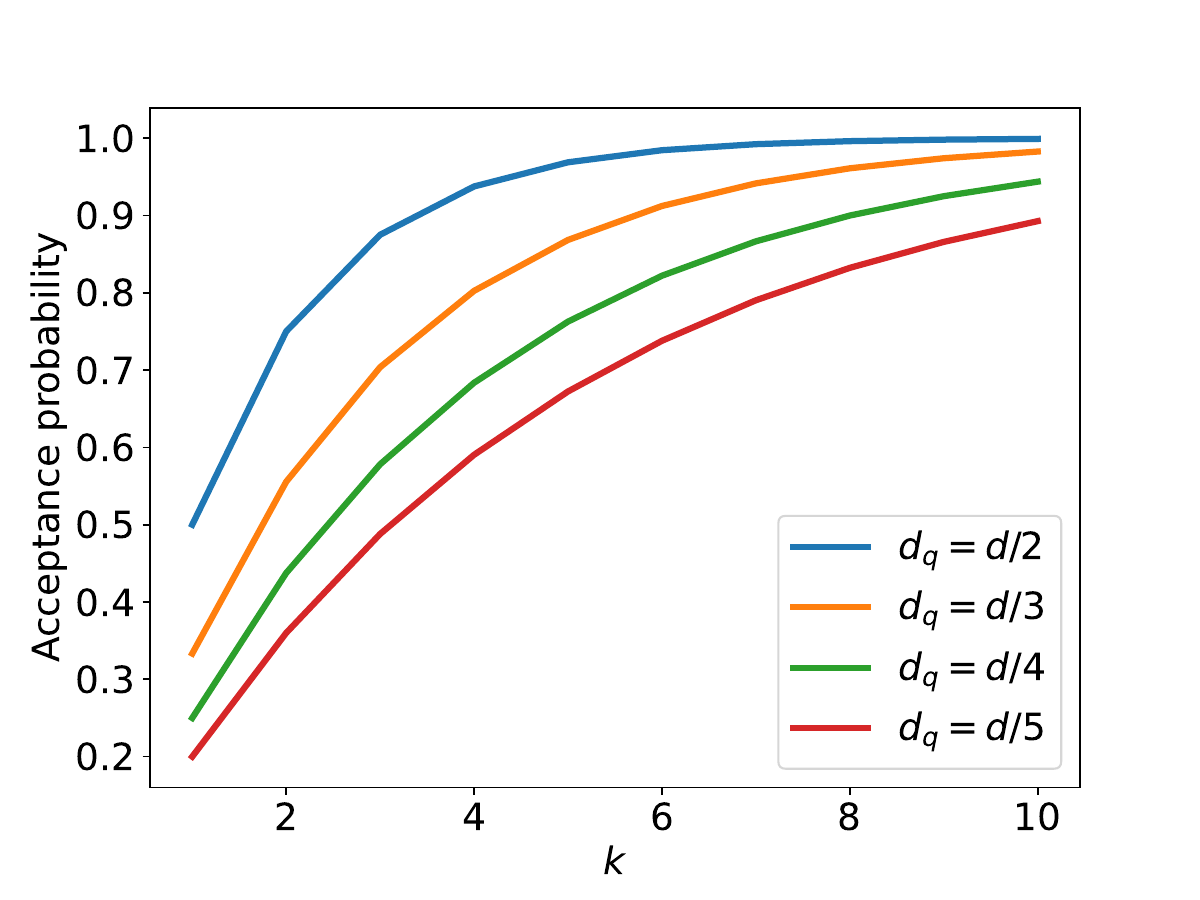}
\caption{Optimal acceptance probability ($\alpha_k$) as a function of $k$ when $p=U(d)$ for $d=120$ and $q = U(d_q)$.}
\label{fig:uniform_k}
\end{minipage}
\end{figure}

\paragraph{Pairs of Bernoulli distributions.}
Let $\text{Ber}(b)$ be a Bernoulli distribution with probability $b$ of getting a head. In Figure~\ref{fig:ber_q}, we plot the acceptance probability comparison between \theerthanew{OTM-$k$} and $\kSeq$ for different Bernoulli distributions $q = \text{Ber}(b)$ as a function of $k$ when $p = \text{Ber}(0.25)$. Note that when $p=q$ ($b = 0.25$), the acceptance probability is always one for both methods. When $p\neq q$, the acceptance probabilities for both methods increase as $k$ increases before they reach one. When $b = 0.1 $ or $0.75$, $\kSeq$ has a worse acceptance probability compared to the \theerthanew{OTM-$k$} algorithm.
When $b = 1$, the two algorithms have the same performance.

\paragraph{Pairs of uniform distributions.}
Let $U(d)$ denote a uniform distribution over $[d]$. In Figure~\ref{fig:uniform_k}, we plot the optimal acceptance probability for different uniform functions $q$ as a function of $k$. For these distributions, it can be shown that $\kSeq$ achieves the optimal acceptance probability $\alpha_k$. Hence only $\alpha_k$ is plotted. Observe that all acceptance probabilities are monotonically increasing and tend to one when $k \to \infty$, as stated in  \cref{lem:props}.%

\subsection{Calculations for $\alpha_k$.}\label{app:figures}
In this section, we provide a sketch of  optimal acceptance probability calculations for results in Figures~\ref{fig:ber_q} and~\ref{fig:uniform_k}.

\paragraph{Figure~\ref{fig:ber_q}: $\text{Ber}(p)$ and $\text{Ber}(q).$} The optimal acceptance probability is
\begin{equation}
\label{eq:ber}
\alpha_k(\text{Ber}(p), \text{Ber}(q)) = \min(q, 1-(1-p)^k) + \min(1 -q, 1-p^k).
\end{equation}
Setting $\Omega_0 = \{0, 1\}$ in \cref{thm:bound} yields the upper bound. For the lower bound observe that 
since $\Omega = \{0, 1\}$, $\indic{y \notin S(x^k)} < 1$ if and only if $x^k$ is $0^k$ or $1^k$. Hence, 
\begin{align*}
\alpha_k(\text{Ber}(p), \text{Ber}(q))
& = \pi(X^k \notin \{0^k, 1^k\}) + \max_{\pi} \{\pi(Y = 0, X^k = 0^k) +  \pi(Y = 1, X^k = 1^k)\} \\
& = 1 - p^k - (1-p)^k + \max_{\pi} \{\pi(Y = 0, X^k = 0^k) +  \pi(Y = 1, X^k = 1^k)\}.
\end{align*}
Consider the transport plan $\pi$ given by $\pi(1^k, 1) = \min(p^k, q)$, $\pi(1^k, 0) = p^k - \min(p^k, q)$, $\pi(0^k, 0) = \min((1-p)^k, 1-q)$, and 
$\pi(0^k, 1) = (1-p)^k - \min((1-p)^k, 1-q)$. 
It can be checked that this is a valid transport plan. To see this matches the upper bound on the optimal cost from \cref{thm:bound}, notice that
\begin{align*}
& 1 - p^k - (1-p)^k + \max_{\pi} \{\pi(Y = 0, X^k = 0^k) +  \pi(Y = 1, X^k = 1^k)\} \\
& = 1 - p^k - (1-p)^k +  \min(p^k, q) +
 \min((1-p)^k, 1-q).
\end{align*}
 If $p^k \leq q$ and $(1-p)^k \leq 1 - q$,  then the above equation simplifies to $1$ and~\eqref{eq:ber} also simplifies to $1$. If $p^k > q$ and  $(1-p)^k \leq 1 - q$, then the above equation simplifies to $1 + q - p^k$ and \eqref{eq:ber} also simplifies to the same quantity. Similarly, the proof applies for $p^k \leq q$ and  $(1-p)^k > 1 - q$.

\paragraph{Figure~\ref{fig:uniform_k}: $p = U(d)$ and $q = U(d/r)$.} The optimal acceptance probability is
\[
\alpha_k(U(d), U(d/r)) = 1 - (1-1/r)^k. 
\]
We first prove $\alpha^k(U(d), U(d/r)) \ge 1 - (1-1/r)^k$ by a construction. Let $S(X^k)$ be the set of unique symbols in $X^k$. Consider the following transport plan, where $Y$ is drawn uniformly from $S(X^k) \cap [d/r]$ and draws a new uniform sample from $[d/r]$ if $S(X^k) \cap [d/r] = \emptyset$. Observe that since $U(d)$ is uniform over $[d]$, this is a valid transport plan and furthermore,
\[
\alpha_k(U(d), U(d/r)) \geq \Pr(S(X)^k \cap [d/r] \neq \emptyset) = 1 - (1-1/r)^k.
\]
The upper bound follows by setting $\Omega_0= [d] \setminus [d/r]$ in \cref{thm:bound}. 
\[
\alpha_k(U(d), U(d/r))  \leq \Pr(S(X^k) \cap [d/r] \neq \emptyset) = 1 - (1-1/r)^k.
\]

\subsection{Acceptance probability of $\kSeq$ for the example in Figure~\ref{fig:uniform_k}}
\label{sub_app:examples1}

In this section, we show that for the example in Figure~\ref{fig:uniform_k}, \textsc{k-Seq} achieves the optimal acceptance accuracy. In this case, $p = U(d)$ and $q = U(d/r)$. Recall that the optimal acceptance probability is
\[
\alpha_k(U(d), U(d/r)) = 1 - (1-1/r)^k. 
\]
For $U(d)$ and $U(d/r)$, we have 
\[
\beta(\gamma) = \sum_{x \in [d]} \min\{p(x), q(x)/\gamma\} = \frac{1}{\max\{r, \gamma \}}.
\]
And hence solving $1 - (1 - \beta(\gamma))^k = \gamma \beta(\gamma)$ gives $\gamma^* = r (1 - (1 - 1/r)^k)$. And be \cref{thm:k_rej}, we have
\[
    \alpha(\pi^{\kSeq}_{\gamma^*}) \ge 1 - (1 - \beta(\gamma^*))^k = 1 - (1 - 1/r)^k.
\]
And the equality holds since this is an upper bound for any coupling.

\subsection{Comparison to multi-round rejection sampling in \cite{miao2023specinfer, eagle2023}}
\newcr{
\label{sec:com2multiround}
In this section, we compare our proposed draft selection algorithms (OTM and \kSeq) to the multi-round rejection sampling algorithm (\textsc{multi-round}) in concurrent and recent work of \cite{miao2023specinfer, eagle2023} (see Algorithm 1 in \cite{eagle2023}) using the example of Bernoulli distributions. As Figure~\ref{fig:ber_q_compare_3} demonstrates, both our proposed algorithms outperform their algorithm. The advantage of OTM is demonstrated by the fact it is the optimal algorithm under the validity guarantee of the final accepted token. Our proposed efficient algorithm $\kSeq$ also outperforms \textsc{multi-round} for the considered examples. We leave a systematic comparison of the algorithms as future work.}

\begin{figure}[h]
\centering
\label{fig:simple_compare}
\centering
\includegraphics[width=0.6\linewidth]{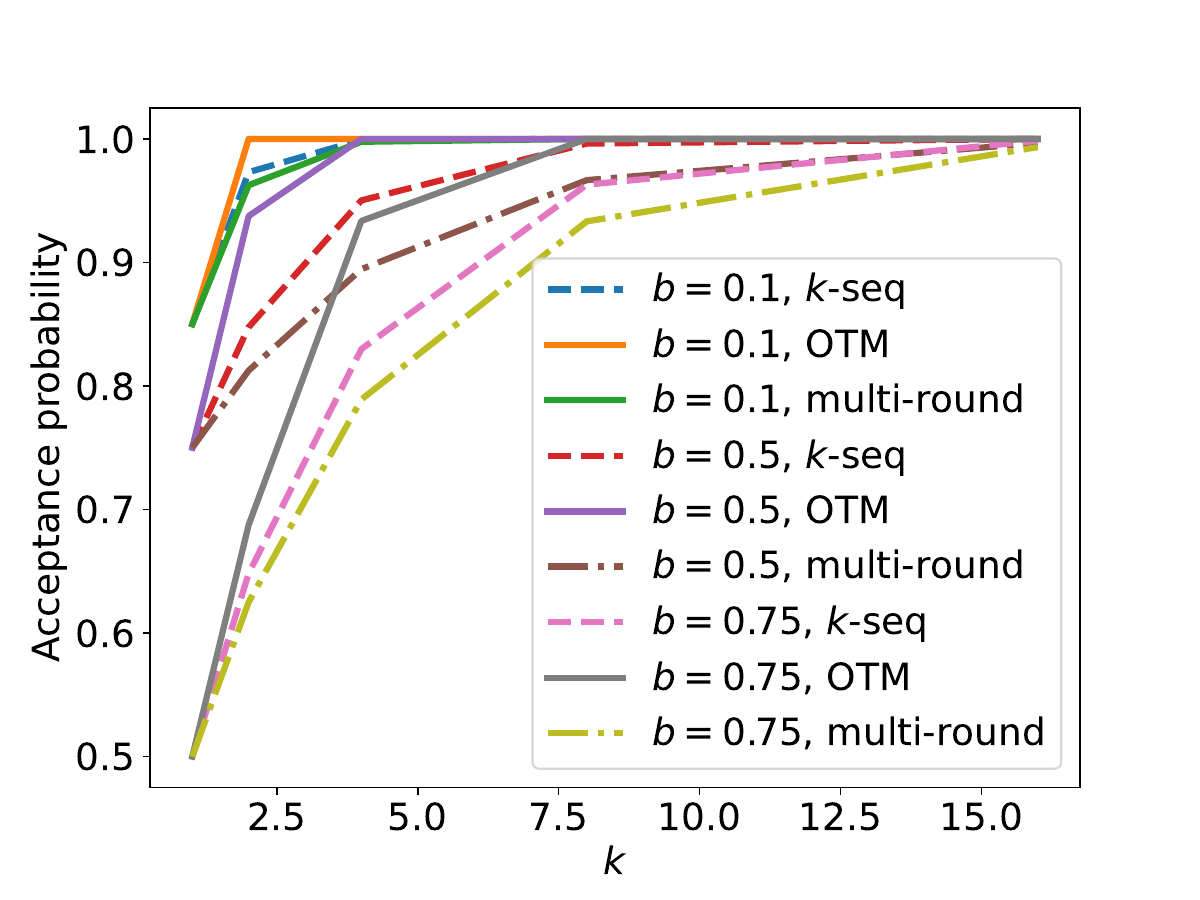}
\caption{Acceptance probability comparison among \theerthanew{OTM-$k$}, $\kSeq$ and \textsc{multi-round} when $p=\text{Ber}(0.25)$ and $q = \text{Ber}(b)$. When $b = 0.25$, all three algorithms achieve an acceptance probability of $1$, and hence omitted in the plot.}
\label{fig:ber_q_compare_3}
\end{figure}
\section{
\theerthanew{Analysis of \algname}
}\label{app:k_rej}

\subsection{Proof of \cref{thm:k_rej}} \label{app:proof_kseq}

We start by proving the following lemma on $\gamma^*$.

\begin{lemma}\label{lem:gamma_star}Let 
\[
    f(\gamma) =  1 - (1 - \beta_{p, q}(\gamma))^k - \gamma  \beta_{p, q}(\gamma).
\]
Then we have
Let $\gamma^*$ be the solution to \cref{eqn:beta_equality}. Then when $d_{\rm TV}(p,q) \in (0,1)$, 
\begin{itemize}
    \item  $f(\gamma)$ is monotone in $\gamma$  in $[1, \infty)$;
    \item $\gamma^* \in \big[1, \min\{k, \max_{x}\frac{q(x)}{p(x)}\}\big]$.
\end{itemize}
\end{lemma}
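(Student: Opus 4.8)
The plan is to reduce everything to elementary monotonicity facts about $\beta_{p,q}(\gamma) = \sum_{x \in \domain}\min(p(x), q(x)/\gamma)$. First I would record three facts: $\beta_{p,q}$ is continuous and non-increasing on $[1,\infty)$ (each summand $\min(p(x), q(x)/\gamma)$ is), it satisfies $0 \le \beta_{p,q}(\gamma) \le 1$, and $\beta_{p,q}(1) = \sum_x \min(p(x), q(x)) = 1 - d_{\rm TV}(p,q)$. For the monotonicity claim, the key observation is the reparametrization $\gamma\,\beta_{p,q}(\gamma) = \sum_{x}\min(\gamma p(x), q(x))$, which is manifestly non-decreasing in $\gamma$ since each $\min(\gamma p(x), q(x))$ is. On the other hand, because $\beta_{p,q}$ is non-increasing and takes values in $[0,1]$, the quantity $1 - (1-\beta_{p,q}(\gamma))^k$ is non-increasing. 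Writing $f(\gamma) = [1 - (1-\beta_{p,q}(\gamma))^k] - \gamma\beta_{p,q}(\gamma)$ as the difference of a non-increasing and a non-decreasing function then shows $f$ is non-increasing (hence monotone) on $[1,\infty)$, which is the first bullet.

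For the second bullet I would locate the zero of $f$ by evaluating it at the endpoints $\gamma = 1$, $\gamma = k$, and $\gamma = \gamma_{\max} := \max_{x}q(x)/p(x)$. At $\gamma = 1$, writing $b_0 := 1 - d_{\rm TV}(p,q) \in (0,1)$, we get $f(1) = (1 - b_0) - (1-b_0)^k = (1-b_0)\,(1 - (1-b_0)^{k-1}) \ge 0$. At $\gamma = k$, Bernoulli's inequality $(1-\beta)^k \ge 1 - k\beta$ (valid for $\beta \in [0,1]$, $k \ge 1$) gives $1 - (1-\beta_{p,q}(k))^k \le k\beta_{p,q}(k)$, i.e.\ $f(k) \le 0$. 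At $\gamma = \gamma_{\max}$ (finite, since $d_{\rm TV}(p,q) > 0$ forces some coordinate with $q > p$, so $\gamma_{\max} > 1$; if $p$ lacks full support, so $\gamma_{\max} = \infty$, I simply drop this case and use $k$), every coordinate satisfies $q(x)/\gamma_{\max} \le p(x)$, so $\beta_{p,q}(\gamma_{\max}) = \sum_x q(x)/\gamma_{\max} = 1/\gamma_{\max}$ and $\gamma_{\max}\beta_{p,q}(\gamma_{\max}) = 1$; hence $f(\gamma_{\max}) = -(1 - 1/\gamma_{\max})^k \le 0$. Consequently $f(\min\{k,\gamma_{\max}\}) \le 0 \le f(1)$, and by continuity together with the monotonicity of $f$ just established, the solution $\gamma^*$ of $f(\gamma^*)=0$ lies in $[1, \min\{k,\gamma_{\max}\}]$.

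The only genuinely nonobvious step is the monotonicity argument, and the trick is precisely to split $f$ and exploit the two different parametrizations $\sum_x\min(p(x),q(x)/\gamma)$ (for $\beta$) and $\sum_x\min(\gamma p(x), q(x))$ (for $\gamma\beta$), so that each piece is monotone ``for free''; attacking $f$ directly through its derivative is messier because $f'$ mixes the sign of $\beta'$ with the factor $k(1-\beta)^{k-1} - \gamma$. A secondary care point is the degenerate regime: when $k = 1$ we have $f(1) = 0$, so $\gamma^* = 1$, and when $p$ does not have full support $\gamma_{\max}$ is infinite, in which case $\min\{k,\gamma_{\max}\} = k$ and only the bound $f(k) \le 0$ is needed. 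Finally, since the lemma speaks of ``the solution,'' I would remark that monotonicity makes the zero set an interval contained in $[1,\min\{k,\gamma_{\max}\}]$ (with $f(\gamma_{\max}) < 0$ strictly whenever $\gamma_{\max}<\infty$ and $d_{\rm TV}(p,q) > 0$), which is exactly what the subsequent binary search over $[1,k]$ relies on for its $O(|\domain|\log((k-1)/\delta))$ runtime.
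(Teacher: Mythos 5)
Your proof is correct and follows essentially the same route as the paper's: you split $f$ into the non-increasing piece $1-(1-\beta_{p,q}(\gamma))^k$ and the non-decreasing piece $\gamma\beta_{p,q}(\gamma)=\sum_x\min(\gamma p(x),q(x))$ for monotonicity, and then evaluate $f$ at $\gamma=1$, $\gamma=k$, and $\gamma=\gamma_{\max}$ using the same three inequalities (the $d_{\rm TV}$ identity at $1$, Bernoulli's inequality at $k$, and $\beta_{p,q}(\gamma_{\max})=1/\gamma_{\max}$). Your additional attention to continuity, the $\gamma_{\max}=\infty$ and $k=1$ edge cases, and the interval structure of the zero set are welcome refinements of details the paper leaves implicit, but they do not constitute a different argument.
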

\begin{proof}
It would enough to prove the followings: (1) $f(\gamma)$ is monotone in $\gamma$  in $[1, \infty)$; (2) $f(1) \ge 0$; (3) $f(k) \le 0$; (4) $f\big(\max_x\frac{q(x)}{p(x)}\big) \le 0$.

To see (1), since $\beta_{p,q}(\gamma)$ is decreasing in $\gamma$, so is $ 1 - (1 - \beta_{p, q}(\gamma))^k$. Moreover, $\gamma \beta_{p, q}(\gamma) = \sum_{x} \min\{ \gamma p(x), q(x)\}$, which is non-decreasing in $\gamma$. Hence we have $1 - (1 - \beta_{p, q}(\gamma))^k - \gamma  \beta_{p, q}(\gamma)$ is decreasing.

To see (2), note that when $\gamma = 1$, $\beta_{p, q}(\gamma) = 1 - d_{\rm TV}(p,q)$. Hence we have
\[
    1 - (1 - \beta_{p, q}(1))^k  = 1 - d_{\rm TV}(p,q)^k \ge 1 - d_{\rm TV}(p,q).
\]
When $\gamma = k$, (3) holds since for $x \in [0, 1]$, we have $ 1 - (1-x)^k \le k x$. Moreover, when $\gamma = \max_x \frac{q(x)}{p(x)} > 1$, we have $\beta_{p, q}(\gamma) = 1/\gamma$ and (4) holds since
\[
    1 - (1 - \beta_{p, q}(\gamma))^k = 1 - (1 - 1/\gamma)^k < 1 = \gamma \cdot 1/\gamma.
\]
\end{proof}

Next we prove \cref{thm:k_rej}, we will break the proof into four parts: (1) computation efficiency; (2) $\pi_\gamma^{\kSeq}$ is a valid transport plan; (3) acceptance probability; (4) optimality guarantee of  $\pi^{\textsc{k-Seq}}_{\gamma^*}$.
\paragraph{Computation efficiency.} Note that \cref{lem:gamma_star} immediately implies that $\gamma^*$ can be computed up to arbitrary accuracy $\delta$ in time $\theerthanew{O(|\domain|\log((k-1)/\delta)}$ using binary search over $[1, k]$. 

\paragraph{Valid transport plan.} We next prove that  $\pi^{\kSeq}_\gamma$ is a valid transport plan when $\gamma \ge \gamma^*$. By \cref{lem:gamma_star}, when $\gamma \ge \gamma^*$, we have $1 - (1 - \beta_{p, q}(\gamma))^k \ge \gamma  \beta_{p, q}(\gamma)$. Recall that $\psucc = 1 - (1 - \beta_{p, q}(\gamma))^k$, and
\begin{equation}
    \forall x \in \domain, \pres(x) = \frac{q(x) - \min\left\{p(x), \frac{q(x)}{\gamma} \right\} \frac{\psucc}{\beta_{p,q}(\gamma)} }{1 - \psucc}. \nonumber 
\end{equation}
$\forall x \in \domain$, we have
\[
    \min\left\{p(x), \frac{q(x)}{\gamma} \right\} \frac{\psucc}{\beta_{p,q}(\gamma)} \le \frac{1 - (1 - \beta_{p, q}(\gamma))^k}{\gamma \beta_{p, q}(\gamma)} q(x) \le q(x),
\]
this implies $\pres(x) \ge 0$ for all $x \in \domain$. Moreover, 
\[
\sum_{x \in \domain} \pres(x) = \sum_{x \in \domain}  \frac{q(x) - \min\left\{p(x), \frac{q(x)}{\gamma} \right\} \frac{\psucc}{\beta_{p,q}(\gamma)} }{1 - \psucc} = 1.
\]
Hence $\pres$ is a valid distribution. It remains to show that the marginal of $Y$ is $q$.
We first compute the probability of the output $Y = x$. Note that probability that $Y=X_1$
is 
\[
p(X_1) \min \left(1, \frac{q(X_1)}{\gamma p(X_1)} \right) = \min \left(p(X_1), \frac{q(X_1)}{\gamma} \right).
\]
Hence 
\[
\Pr(Y = X_1 = x) = \min \left(p(x), \frac{q(x)}{\gamma} \right).
\]
Therefore,
\[
\Pr(Y = X_1) = \sum_x \min \left(p(x), \frac{q(x)}{\gamma} \right) = \beta(\gamma).
\]
Similarly, probability that
\[
\Pr(Y = X_2 = x) = \Pr(Y \neq X_1) \Pr( Y = X_2 | Y \neq X_1) = (1-\beta_{p,q}(\gamma)) \min \left(p(x), \frac{q(x)}{\gamma} \right).
\]
Hence,

\begin{align*}
    & \;\;\;\;\probof{Y = x, \text{one of $X^k$ is accepted}} \\
    &= \sum_{i = 0}^{k-1} \probof{X_1, \ldots, X_i \text{ are rejected}, X_{i+1} \text{ is accepted, and } X_{i+1} = x}\\
    & = \sum_{i = 0}^{k-1} (1 - \beta_{p,q}(\gamma))^{i} \cdot p(x) \cdot \min \left\{1, \frac{q(x)}{p(x)\gamma}\right\} \nonumber \\
    &= \min \left\{p(x), \frac{q(x)}{\gamma}\right\} \cdot \sum_{i = 0}^{k-1} (1 - \beta_{p,q}(\gamma))^{i} \nonumber \\
    &=  \min\left\{p(x), \frac{q(x)}{\gamma} \right\} \frac{1 - (1 - \beta_{p,q}(\gamma))^k}{\beta_{p,q}(\gamma)} \label{eqn:px_accept}
\end{align*}

Summing over all symbols $x$ yields
\[
    \Pr(\text{one of $X^k$ is accepted})
= 
\sum_x  \min\left\{p(x), \frac{q(x)}{\gamma} \right\} \frac{1 - (1 - \beta_{p,q}(\gamma))^k}{\beta_{p,q}(\gamma)}  = 1 - (1 - \beta_{p,q}(\gamma))^k.
\]
Hence we have
\begin{align*}
\probof{Y = x} & =\probof{Y = x, \text{one of $X^k$ is accepted}}  + \Paren{1 - \probof{\text{one of $X^k$ is accepted}}}\pres(x)  \\
& = \min\left\{p(x), \frac{q(x)}{\gamma} \right\} \frac{1 - (1 - \beta_{p,q}(\gamma))^k}{\beta_{p,q}(\gamma)}   \\
 & \qquad \qquad +  \Paren{1 - (1 - \beta_{p,q}(\gamma))^k}
\frac{q(x) - \min\left\{p(x), \frac{q(x)}{\gamma} \right\} \frac{1 - (1 - \beta_{p,q}(\gamma))^k}{\beta_{p,q}(\gamma)} }{ 1 - (1 - \beta_{p,q}(\gamma))^k} \\
& = q(x).
\end{align*}

\paragraph{Acceptance probability.} The acceptance probability holds since
\[
    \alpha(\pi^{\kSeq}_\gamma) \ge \Pr(\text{one of $X^k$ is accepted}) = 1 - (1 - \beta_{p,q}(\gamma))^k.
\]

\paragraph{Optimality guarantee of $\pi^{\kSeq}_{\gamma^*}$.} It can be seen that $\beta(\gamma)$ is decreasing in $\gamma$, and so is $1 - (1 - \beta_{p,q}(\gamma))^k$. Hence we have
\[
    \alpha(\pi^{\kSeq}_{\gamma^*}) \ge  1 - (1 - \beta_{p,q}(\gamma^*))^k \ge 1 - (1 - \beta_{p,q}(k))^k = c_k(p, q) \cdot \new{\sum_x}\min\{k p(x), q(x)\},
\]
where 
\[
    c_k(p, q) = \frac{1 - (1 - \beta_{p,q}(k))^k}{k \beta_{p,q}(k)} \in [1 - (1 - 1/k)^k, 1).
\]
The statement holds since $f(x) = \frac{1 - (1 - x)^k}{kx}$ in monotonically decreasing when $x \in (0, 1/k]$ and $f(1/k) = 1 - (1 - 1/k)^k, \lim_{x \rightarrow 0^+} f(x) = 1$.

Moreover, $\forall x \ge 0, kx \ge 1 - (1 - x)^k$. Hence we have
\begin{align}
    \alpha(\pi^{\kSeq}_{\gamma^*}) & \ge \Paren{1 - (1 - 1/k)^k} \cdot \new{\sum_x} \min\{k p(x), q(x)\} \nonumber \\
     & \ge \Paren{1 - (1 - 1/k)^k} \new{\sum_x} \min\{1 - (1 - p(x))^k, q(x)\} \label{eqn:kseq_guarantee_to_upper}\\
     & \ge \Paren{1 - (1 - 1/k)^k} \alpha_k(p,q), nonumber
\end{align}
where the last inequality is due to the upper bound in \cref{thm:bound} with $\Omega_0 = \Omega$.

\subsection{Proof of \cref{thm:spek_plus}} \label{app:proof_spek_plus}

\new{We prove the theorem via induction. When $L = 1$, $\tau \in \{1, 2\}$. Let $k = |S|$. Since for the first step, $f_{\pi}$ in \cref{alg:spek_plus} is a valid transport plan from $\cM_s(\cdot \mid x^t)^{\otimes k}$ to $\cM_b(\cdot \mid x^t)$. We have $Y_1 \sim \cM_b(\cdot \mid x^t)$, which completes the proof when $\tau = 1$.
When $\tau = 2$, we have $Y_2 \sim \cM_b(\cdot\mid x^t, Y_1)$ as stated in Step~\ref{step:extra_token} of \cref{alg:spek_plus}. Hence the statement holds.
} 

Suppose the theorem holds for $L = \ell \ge 1$, we next prove that it holds for $L = \ell + 1$. Let $Y^\tau_{x^t}$ be the output sequence given context $x^t$. When $\tau = 1$, since for the first step, $f_{\pi}$ in \cref{alg:spek_plus} is a valid transport plan from $\cM_s(\cdot \mid x^t)^{\otimes k}$ to $\cM_b(\cdot \mid x^t)$, we have $Y_1 \sim \cM_b(\cdot \mid x^t)$. When $\tau > 1$, $S_{\rm next} \neq \emptyset$ and by the assumption in \cref{eqn:assump_drafts}, $S_{\rm next}$ contains $k' = |S_{\rm next}|$ drafts from $\cM_s(\cdot \mid (x^t, Y_1))$ with length $\ell$. Let $Y^{\tau'}_{x^t, Y_1}$ be the output sequence given context $(x^t, Y_1)$, by the induction assumption, we have
for any $\tau_0 \in [1, \ell+1]$, and any $\tau_0$-length, 
sequence $o^{\tau_0} = (o(1),\ldots, o(\tau_0)) \in \domain^{\tau_0}$, we have
\[
\probof{Y^{\tau'}_{x^t, Y_1} =  o^{\tau_0} | \tau' = \tau_0} = \Pi_{i = {1}}^{\tau_0} \cM_b(o(i)\mid x^{t}, Y_1, o^{i-1}).
\]

Note that in this case $\tau = \tau' + 1$, and for any $(\tau_0+1)$-length sequence $o^{\tau_0+1} = (o(1),\ldots, o(\tau_0), o(\tau_0+1)) \in \domain^{\tau_0+1}$, we have 
\begin{align*}
    \probof{Y^{\tau}_{x^t} =  o^{\tau_0+1} | \tau = \tau_0+1} & = \probof{Y_1 = o_1} \cdot  \probof{Y^{\tau'}_{x^t, o_1} =  o^{\tau_0} | \tau' = \tau_0} \\
    & = \cM_b(o(1)\mid x^{t}) \cdot
    \Pi_{i = {1}}^{\tau_0} \cM_b(o(i+1)\mid x^{t}, o^{i}) \\
    & = \Pi_{i = {1}}^{\tau_0 + 1} \cM_b(o(i)\mid x^{t}, o^{i-1}).
\end{align*}
Combining the two cases, we complete the proof.

\section{\theerthanew{Candidate set construction via a prefix-tree}} \label{sec:tree_sampling}

As discussed in \cref{sec:intro}, the size of the draft set $S$ is constrained by the number of parallel computations that can be supported in the hardware. Hence it is important to design the draft set carefully to allow for a longer sequence of accepted candidate sets. In addition to the \iid~draft set selection approach listed in \cref{sec:spectr}, we present an algorithm that samples a draft set that forms the leaves of a prefix tree. Given a draft set size $K$, the algorithm can be specified by a sequence of parameter $(k_1, k_2, \ldots, k_L)$ satisfying $\prod_{i = 1}^L k_i = K$. 

The algorithm starts with a root node with sequence $x^{1:t}$ and forms a prefix tree of depth $L$. At depth $i \in [1:L - 1]$, each node is expanded by a factor of $k_{i + 1}$ and each of its children will contain a sequence that satisfies: (1) Its prefix agrees with the sequence in the parent node; (2) The next token is sampled from the conditional probability given the prefix in small model. These child nodes will be at depth $i+1$ and the process goes until it hits depth $L$. We give a detailed description of the algorithm in \cref{alg:tree_sample}.

\begin{algorithm}[ht] 
\caption{Draft set selection via prefix-tree.}
\begin{algorithmic}[1]\label{alg:tree_sample}
\REQUIRE{Input sequence $x^t$; expansion factors at each level:
$ (k_1, k_2, \ldots, k_L)$.}
\STATE $S_0 = \{ x^{t} \}$.
\FOR{$i = 0, 1, 2, \ldots, L - 1$}
\STATE $S_{i + 1} = \emptyset$.
\FOR{all $\seq \in S_i$}
\STATE Sample $k_{i+1}$ \iid~tokens $X_1, X_2, \ldots, X_{k_{i+1}}$ from $\cM_b(\cdot \mid \seq)$.
\STATE $S_{i + 1} = S_{i + 1} \cup \{(\seq, X_i), i = 1, 2, \ldots k_{i+1} \}$.
\ENDFOR
\ENDFOR
\STATE \textbf{Return} $S_L$.
\end{algorithmic}
\end{algorithm}

\section{Additional experiments}
\label{app:experiments}

\new{In this section, we perform a detailed investigation of different factors that affect the speed of \algname~with smaller transformer models. We train decoder-only transformer models on the LM1B dataset based on the example provided in the FLAX library \citep{flax2020github}. For the draft model, we use transformer models with $2M$, $6M$ and $20M$ parameters, and for the large model we use a $97M$ parameter transformer  model.} 

We first provide a verification of the computational model introduced in Section \ref{sec:intro} by reporting the latencies of using the large model to compute the probabilistic distributions with parallelization over time and batch axes. 
As shown in Table \ref{tab:parallel}, the latency stays roughly constant in these setting. 

\begin{table}[t]
\begin{center}
\caption{Average latency with parallelization along the time axis and batch axis. We report average latency with standard deviation over 1,000 runs using a $97M$ transformer relative to length = 4 and batch = 1 on GPU. 
\label{tab:parallel}}
\begin{tabular}{ c | c | c| c| c }
 \hline 
Relative latency & batch = 1 & batch = 2 & batch = 4 & batch = 8  \\ \hline 
length = 4 & 1.00 $\pm$ 0.16 & 1.01 $\pm$ 0.15 & 1.06 $\pm$ 0.10 & 1.10 $\pm$ 0.16 \\\hline
length = 8 & 1.01 $\pm$ 0.18 & 1.09 $\pm$ 0.25 & 1.10 $\pm$ 0.09 & 1.42 $\pm$ 0.4  \\ \hline
\end{tabular}
\end{center}
\end{table}

Similar to Table~\ref{tab:parallel}, we report relative latency when parallelizing across the time and batch axes using the small $6M$ draft model in Table~\ref{tab:small-parallel}.
In Table~\ref{tab:small-parallel}, the reported relative latencies are relative to the large $97M$ model to get a sense of the relative cost of sampling multiple drafts with the small model compared to the large model.

\begin{table}[t]
\begin{center}
\caption{Average latency with parallelization along the time axis and batch axis. We report average latency with standard deviation over 1,000 runs using a $6M$ transformer relative to the $97M$ transformer with length = 4 and batch = 1 on GPU. 
\label{tab:small-parallel}}
\begin{tabular}{ c | c | c| c| c }
 \hline 
Relative latency & batch = 1 & batch = 2 & batch = 4 & batch = 8  \\ \hline 
length = 4 & 0.18 $\pm$ 0.02 & 0.19 $\pm$ 0.04 & 0.18 $\pm$ 0.09 & 0.20 $\pm$ 0.13 \\\hline
length = 8 & 0.17 $\pm$ 0.04 & 0.19 $\pm$ 0.05 & 0.16 $\pm$ 0.02 & 0.18 $\pm$ 0.04 \\\hline
\end{tabular}
\end{center}
\end{table}

To see how the size of size of the draft model will affect the \new{block efficiency}, we also include results for varying draft model sizes with the same $97M$ large model for LM1B in Table~\ref{tab:results-vary-draft}.
These draft models were produced by either halving ($2M$) or doubling ($20M$) the original $6M$ draft model's number of layers, embedding dimension, MLP dimension, and number of attention heads.
As expected, the larger draft models improve all speculative methods' block efficiency with SpecTr maintaining the best performance across all draft model sizes.

\begin{table}[t]
\begin{center}
\caption{Experimental results on the LM1B dataset with varying draft model sizes and the $97M$ transformer as the large model. All results are over $1000$ test prompts averaged over three different random seeds and sampling temperature of $1.0$ for both the draft and large models. 
\label{tab:results-vary-draft}}
\begin{tabular}{c | c | c | c | c}
 Draft model & Algorithm & $K$ & $L$ & Block efficiency
 \\\hline
 $2M$ Transformer & Baseline & - & - & $1.00$   \\  \cline{2-5} 
 & Speculative & 1 & 4 & $1.86 \pm 0.02$    \\ 
 & \algname & 2 & 4& $2.07 \pm 0.01$  \\
 & \algname & 4 & 4&  $2.32 \pm 0.00$   \\
 & \algname & 8 & 4&  $\mathbf{2.56 \pm 0.01}$   \\  \cline{2-5} 
 & Speculative & 1 & 8& $1.91 \pm 0.01$   \\
 & \algname & 2 & 8&  $2.15 \pm 0.01$  \\
 & \algname & 4 & 8& $2.41 \pm 0.00$   \\
 & \algname & 8 & 8&  $\mathbf{2.68 \pm 0.01}$    \\ \hline
  $6M$ Transformer & Baseline & - & - & $1.00$   \\  \cline{2-5} 
 & Speculative & 1 & 4 & $2.21 \pm 0.01$    \\ 
 & \algname & 2 & 4& $2.43 \pm 0.01$  \\
 & \algname & 4 & 4&  $2.74 \pm 0.01$   \\
 & \algname & 8 & 4&  $\mathbf{2.99 \pm 0.02}$   \\  \cline{2-5} 
 & Speculative & 1 & 8& $2.33 \pm 0.01$   \\
 & \algname & 2 & 8&  $2.61 \pm 0.02$  \\
 & \algname & 4 & 8& $2.96 \pm 0.03$   \\
 & \algname & 8 & 8&  $\mathbf{3.27 \pm 0.02}$    \\ \hline
 $20M$ Transformer & Baseline & - & - & $1.00$   \\  \cline{2-5} 
 & Speculative & 1 & 4 & $2.71 \pm 0.01$    \\ 
 & \algname & 2 & 4& $2.96 \pm 0.00$  \\
 & \algname & 4 & 4&  $3.28 \pm 0.02$   \\
 & \algname & 8 & 4&  $\mathbf{3.49 \pm 0.03}$   \\  \cline{2-5} 
 & Speculative & 1 & 8& $3.12 \pm 0.02$   \\
 & \algname & 2 & 8&  $3.48 \pm 0.04$  \\
 & \algname & 4 & 8& $3.85 \pm 0.05$   \\
 & \algname & 8 & 8&  $\mathbf{4.15 \pm 0.04}$    \\ \hline
\end{tabular}
\end{center}
\end{table}

\end{document}